\renewcommand{\eqref}[1]{Equation~(\ref{#1})}
\newcounter{RevisionNumber} 
\newcommand{\hlc}[2]{\ifthenelse{\value{RevisionNumber}=#1}{\hl{#2}}{#2}}
\renewcommand{\ASnote}[2]{\ifthenelse{\value{RevisionNumber}=#1}{\hl{#2}}{}}
\newcommand{\hlFR}[1]{{\sethlcolor{green}\hl{#1}\sethlcolor{yellow}}}
\newcommand{\TAB}{\hspace{6mm}} 
\renewcommand{\hl}[1]{#1}
\renewcommand{\D}{\ensuremath{D}}
\begin{document}
\title{Ranked bandits in metric spaces:\\
learning diverse rankings over large document collections\footnotemark[1]}
\author{\name Aleksandrs Slivkins \email slivkins@microsoft.com \\
        \addr Microsoft Research Silicon Valley \\ 1065 La Avenida, Mountain View, CA 94043, USA
        \AND
        \name Filip Radlinski \email filiprad@microsoft.com \\
        \addr Microsoft Research Cambridge\\ 7 J.J. Thomson Ave., Cambridge UK
        \AND
        \name Sreenivas Gollapudi \email sreenig@microsoft.com \\
        \addr Microsoft Research Silicon Valley \\ 1065 La Avenida, Mountain View, CA 94043, USA
}
\editor{}

\maketitle

\begin{abstract}
Most learning to rank research has assumed that the utility of different documents is independent, which results in learned ranking functions that return redundant results. The few approaches that avoid this have rather unsatisfyingly lacked theoretical foundations, or do not scale. We present a learning-to-rank formulation that optimizes the fraction of satisfied users, with several scalable algorithms that explicitly takes document similarity and ranking context into account. Our formulation is a non-trivial common generalization of two multi-armed bandit models from the literature: \emph{ranked bandits} \citep{RBA-icml08} and \emph{Lipschitz bandits} \citep{LipschitzMAB-stoc08}. We present theoretical justifications for this approach, as well as a near-optimal algorithm. Our evaluation adds optimizations that improve empirical performance, and shows that our algorithms learn orders of magnitude more quickly than previous approaches.
\end{abstract}

\begin{keywords}
Online learning, clickthrough data, diversity, multi-armed bandits, contextual bandits, regret, metric spaces.
\end{keywords}

\renewcommand{\thefootnote}{$^\fnsymbol{footnote}$}
\footnotetext[1]{Preliminary versions of this paper has been published as a conference paper in \emph{ICML 2010} and as a technical report at {\tt arxiv.org/abs/1005.5197} (May 2010). This version contains full proofs and a significantly revised presentation.}
\renewcommand{\thefootnote}{\arabic{footnote}}

\section{Introduction}

Identifying the most relevant results to a query is a central problem in web search, hence learning ranking functions  has received a lot of attention \citep[e.g.,][]{Joachims-kdd02,Burges05,Chu05Gaussian,SoftRank}. One increasingly important goal is to learn from user interactions with search engines, such as clicks. We address the task of learning a ranking function that minimizes the likelihood of \emph{query abandonment}: the event that the user does not click on any of the search results for a given query. This objective is particularly interesting as query abandonment is a major challenge in today's search engines, and is also sensitive to the diversity and redundancy among documents presented.


We consider the Multi-Armed Bandit (MAB) setting
\citep[e.g.\!][]{CesaBL-book}, which captures many online learning problems
wherein an algorithm chooses sequentially among a fixed set of alternatives, traditionally called ``arms''. In each round an algorithm chooses an arm and collects the corresponding reward. Crucially, the algorithm receives limited feedback -- only for the arm it has chosen, which gives rise to the tradeoff between \emph{exploration} (acquiring new information) and \emph{exploitation} (taking advantage of the information available so far).

While most of the literature on MAB corresponds to learning a single best alternative, MAB algorithms can also be extended to learning a ranking of documents that minimizes query abandonment~\citep{RBA-icml08,Streeter08}.
In this setting, called \emph{Ranked Bandits}, in each round an algorithm chooses an \emph{ordered list} of $k$ documents from some fixed collection of documents, and receives clicks on some of the chosen documents. Crucially, the click probability for a given document may depend on the  documents shown above: a user scrolls the list top-down and may leave as soon as she clicked on the first document. The goal is to minimize query abandonment.

\citet{RBA-icml08} and \citet{Streeter08} propose a simple but effective approach: for each position in the ranking there is a separate instance bandit algorithm  which is responsible for choosing a document for this position. However, the specific algorithms they considered are impractical at WWW scales.


Prior work on MAB algorithms has considered exploiting structure in the space of arms to improve convergence rates. One particular approach, articulated by \citet{LipschitzMAB-stoc08} is well suited to our scenario: when the arms form a metric space and the payoff function satisfies a Lipschitz condition with respect to this metric space. The metric space provides information about similarity between arms, which allows the algorithm to make inferences about
similar arms without exploring them. Further, they propose a  ``zooming algorithm" which partitions the metric space into regions (and treats each region as a ``meta-arm'') so that the partition is adaptively refined over time and becomes finer in regions with higher payoffs.

\OMIT{In web search, there are additional signals that can
make ``metric" approaches even more effective.  A search user typically scans results top down, and clicks on more relevant documents. One can therefore infer the {\em context} in which a
click happened: the skipped documents at higher ranks.}

In web search, a metric space directly models similarity between documents. (It is worth noting that most offline learning-to-rank approaches also rely on similarity between documents, at least implicitly.)

\OMIT{ 
Further, one can use additional signals. A search user typically scans results top down, and clicks on more relevant documents.
One can therefore infer the {\em context} in which a
click happened: the skipped documents at higher ranks. To
fully exploit the context we factor in both {\em conditional
clickthrough rates} and {\em correlated clicks}.  The former conditions on the event that the user skipped a set of documents \citep[as suggested by][]{chen06}, and the
latter refers to the probability that two documents are both relevant or both irrelevant to a given user.
} 

\xhdr{Our contributions.}
This paper initiates the study of bandit learning-to-rank with side information on similarity between documents. We adopt the Ranked bandits setup: a user scrolls the results top-down and may leave after a single click, the goal is to minimize query abandonment. The similarity information is expressed as a metric space.

In this paper we consider a ``perfect world'' scenario: there exists an informative distance function which meaningfully describes similarity between documents in a ranked setting, and an algorithm has access to such function. We focus on two high-level questions: How to represent the knowledge of document similarity, and how to use it algorithmically in a bandit setting. We believe that studying such ``perfect world'' scenario is useful, and perhaps necessary, to inform and guide the corresponding data-driven work.

We propose a simple bandit model which combines \emph{Ranked bandits}
~\citep{RBA-icml08} and \emph{Lipschitz bandits}~\citep{LipschitzMAB-stoc08}, and admits efficient bandit algorithms that, unlike those in prior work on bandit learning-to-rank, scale to large document collections. Our model is based on the new notion of ``conditional Lipschitz continuity'' which asserts that similar documents have similar click probabilities even conditional on the event that all documents in a given set of documents are skipped (i.e., not clicked on) by the current user. We study this model both theoretically and empirically.

First, we validate the expressiveness of our model by providing an explicit construction for a wide family of plausible user distributions which provably fit the model. The analysis of this construction is perhaps the most technical contribution of this paper. We also use this construction in simulations.


Second, we put forth a battery of algorithms for our model. Some of these algorithms are straightforward combinations of ideas from prior work on Ranked bandits and Lipschitz bandits, and some are new.

A crucial insight in the new algorithms is that for each position $i$ in the ranking there is a \emph{context} that we can use, namely the set of documents chosen for the above positions in the same round. Indeed, since our objective is non-abandonment we only care about position $i$ if all documents shown above $i$ have been skipped in the present round. So the algorithm responsible for position $i$ can simply \emph{assume} that these documents have been skipped.

This interpretation of contexts allows us to cast the position-$i$ problem as a \emph{contextual bandit} problem. Moreover, we derive a Lipschitz condition on contexts (with respect to a suitably defined metric), which allows us to use the contextual Lipschitz MAB machinery from~\citep{contextualMAB-slivkins09}. We also exploit correlations between clicks: if a given document is included in the context -- i.e., if this document is skipped by the current user -- then similar documents are likely to be skipped, too. More specifically, we propose two algorithms that use contexts: a ``heavy-weight'' algorithm  which uses both the metric on contexts and correlated clicks, and a ``light-weight'' algorithm which uses correlated clicks but not the metric on contexts.

Third, we provide scalability guarantees for the heavy-weight contextual algorithm, proving that the convergence rate depends only on the dimensionality of the metric space but not on the number of documents. However, we argue that our provable guarantees do not fully reflect the power of the algorithm, and outline some directions for the follow-up theoretical work. In particular, we identify a stronger benchmark and discuss convergence to this benchmark. We provide an initial result: we prove, without any guarantees on the convergence rate, that the heavy-weight contextual algorithm indeed converges to this stonger benchmark. This theoretical discussion is one of the contributions.

Finally, we empirically study the performance of our algorithms. We run a large-scale simulation using the above-mentioned construction with realistic parameters. The main goal is to compare the convergence rates of the various approaches.  In particular, we confirm that metric-aware algorithms significantly outperform the metric-oblivious ones, and that taking the context into account improves the convergence rate. Somewhat surprisingly, our light-weight contextual algorithm performs better than the heavy-weight one.

A secondary, smaller-scale experiment studies the limit behaviour of the algorithms, i.e. the query abandonment probability that the algorithms converge to. Following the theoretical discussion mentioned above, we design a principled example on which different algorithms exhibit very different limit behaviour. Interestingly, the heavy-weight contextual algorithm is the only algorithm that achieves the optimal limit behaviour in this experiment.

\xhdr{Map of the paper.} We start with a brief survey of related work (Section~\ref{sec:related-work}). We define our model in Section~\ref{sec:defns}, and validate its expressiveness in Section~\ref{sec:model}. In-depth discussion of relevant approaches from prior work is in Section~\ref{sec:algorithms-prior}. Our new approach, ranked contextual bandits in metric spaces, is presented in Section~\ref{sec:ranked-contextual}. Scalability guarantees are discussed in Section~\ref{sec:provable}. We present our simulations in Section~\ref{sec:evaluation}.

To keep the flow of the paper, the lengthy proofs for the theoretical results in Section~\ref{sec:model} are presented in Section~\ref{sec:extending-mu} and Section~\ref{sec:proof-conditional}. Moreover, the background on instance-dependent regret bounds for \UCB-style algorithms is discussed in Appendix~\ref{app:instance-dependent}.

\section{Related work on multi-armed bandits}
\label{sec:related-work}

Multi-armed bandits has been studied for many decades as a simple yet expressive model for understanding exploration-exploitation tradeoffs. A thorough discussion of the literature on bandit problems is beyond the scope of this paper. For background, a reader can refer to a book~\citep{CesaBL-book} and a recent survey~\citep{Bubeck-survey12} on regret-minimizing bandits.%
\footnote{Regret of an algorithm in $T$ rounds, typically denoted $R(T)$, is the expected payoff of the benchmark in $T$ rounds minus that of the algorithm. A standard benchmark is the best arm in hindsight.}
A somewhat different, Bayesian perspective can be found in surveys~\citep{Sundaram-survey03,Bergemann-survey06}.

\newcounter{FnIndWork}
\addtocounter{footnote}{1}
\setcounter{FnIndWork}{\value{footnote}}
\newcommand{\IndWork}{\footnotemark[\value{FnIndWork}]}

On a very high level, there is a crucial distinction between regret-minimizing formulations and Bayesian/MDP formulations (see the surveys mentioned above); this paper follows the former.
Among regret-minimizing formulations, an important distinction is between stochastic rewards~\citep{Lai-Robbins-85,bandits-ucb1} and adversarial rewards~\citep{bandits-exp3}.

Below we survey several directions that are directly relevant to this paper.

\footnotetext[\value{footnote}]{This is either concurrent or subsequent work with respect to the conference publication of this paper.}

\xhdr{Ranked bandits.}
A bandit model in which an algorithm learns a ranking of documents with a goal to minimize query abandonment has been introduced in~\citep{RBA-icml08} under the name \emph{ranked bandits}. A crucial feature in this setting is that the click probability for a given document may depend not only on the document and the position in which it is shown, but also the  documents shown above. In particular, documents shown above can ``steal'' clicks from the documents shown below, in the sense that a user scrolls the list top-down and may leave as soon as she clicked on the first document.

Independently,~\citep{Streeter08} considered a more general model where the goal is to minimize an arbitrary (known) submodular set function, rather than query abandonment. A further generalization to submodular functions on ordered assignements (rather than on sets) was considered in~\citep{Golovin09}. The contributions of the three papers essentially coincide for the special case of ranked bandits.

\citep{Uchiya-alt10}\IndWork{} and \citep{Kale-slate-nips10}\IndWork{} considered a related bandit model in which an algorithm selects a ranking of documents in each round, but the click probabilities for a given document do not depend on which other documents are shown to the same user.

\xhdr{Bandits with structure.}
Numerous papers enriched the basic MAB setting by assuming some structure on arms, typically in order to handle settings where the number of arms is very large or infinite. Most relevant to this paper is the model where arms lie in a metric space and their expected rewards satisfy the Lipschitz condition with respect to this metric space (see Section~\ref{sec:defns} for details). This model, for a general metric space, has been introduced in~\cite{LipschitzMAB-stoc08} under the name \emph{Lipschitz MAB}; the special case of unit interval has been studied in~\citep{agrawal-bandits-95,Bobby-nips04,AuerOS/07} under the name \emph{continuum-armed bandits}. Subsequent work on Lipschtz MAB includes~\citep{xbandits-nips08,DichotomyMAB-soda10,Munos-ecml10,contextualMAB-slivkins09,ImplicitMAB-nips11}. A closely related model posits that arms corresponds to leaves on a tree, but no metric space is revealed to the algorithm ~\citep{Kocsis-ecml06,yahoo-bandits07,Munos-uai07,ImplicitMAB-nips11}.

Another commonly assumed structure is linear or convex payoffs ~\citep[e.g.][]{Bobby-stoc04,FlaxmanKM-soda05,DaniHK-nips07,AbernethyHR-colt08,Hazan-soda09}.
Linear/convex payoffs is a much stronger assumption than similarity, essentially because it  allows to make strong inferences about far-away arms. Other structural assumptions have been considered, e.g.~\citep{Munos-nips08} and \citep{Bubeck-colt10,GPbandits-icml10}\IndWork.

The distinction between the various possible structural assumptions is orthogonal to the distinction between stochastic and adversarial rewards. With a few exceptions, papers on MAB with linear/convex payoffs allow adversarial payoffs, whereas papers on MAB with similarity information focus on stochastic payoffs


\xhdr{Contextual bandits.} Here in each round the algorithm receives a \emph{context}, chooses an arm, and the reward depends both on the arm and the context. The term ``contextual bandits'' was coined in~\citep{Langford-nips07}. The setting, with a number of different modifications, has been introduced independently in several papers; a possibly incomplete list is \citep{Woodroofe79,Auer-focs00,Wang-sideMAB05,Langford-nips07,Hazan-colt07,yahoo-bandits07}.

There are several models for how contexts are related to rewards: rewards are linear in the context \citep[e.g.][]{Auer-focs00,Langford-nips07} and \cite{Reyzin-aistats11-linear}\IndWork, the context is a random variable correlated with rewards~\citep{Woodroofe79,Wang-sideMAB05,Zeevi-colt10}; rewards are Lipschitz with respect to a metric space on contexts~\citep{Hazan-colt07,contextualMAB-slivkins09} and~\citep{Pal-Bandits-aistats10}\IndWork.

 Most work on contextual bandits has been theoretical in nature; experimental work on contextual MAB includes~\citep{yahoo-bandits07} and \citep{Langford-www10,Langford-wsdm11}\IndWork.

\section{Problem formalization: ranked bandits in metric spaces}
\label{sec:defns}

Let us introduce the online learning-to-rank problem that we study in this paper.


\xhdr{Ranked bandits.}
Following \cite{RBA-icml08}, we are interested in learning an optimally diverse ranking of documents for a given query. We model it as a \emph{ranked bandit} problem as follows. Let $X$ be a set of documents (``arms"). Each `user' is represented by a binary \hlc{3}{\emph{relevance vector}}: a function $\pi: X\to \{0,1\}$. A document $x\in X$ is called ``relevant" to the user if and only if $\pi(x)=1$. Let $\F_X$ be the set of all possible relevance vectors. Users come from a distribution $\P$ on $\F_X$ that is fixed but not revealed to an algorithm.\footnote{This also models users for whom documents are probabilistically relevant \citep{RBA-icml08}.} This $\P$ will henceforth be called the \emph{user distribution}.

In each round, the following happens: a user arrives, sampled independently from $\P$; an algorithm outputs a list of $k$ documents; the user scans this list top-down, and clicks on the first relevant document. The goal is to maximize the expected fraction of \emph{satisfied users}: users who click on at least one document.
Note that in contrast with prior work on diversifying existing rankings \citep[e.g.\!][]{Carbonell98MMR}, the algorithm needs to directly learn a diverse ranking.

Since we count satisfied users rather than the clicks themselves, we can assume w.l.o.g. that a user leaves once she clicks once. (Alternatively, the algorithm does not record any subsequent clicks.) A user is satisfied or not satisfied independently of the order in which she scans the results. However, the assumption of the top-down scan determines the feedback received by the algorithm, i.e. which document gets clicked.

We will say that there are $k$ \emph{slots} to be filled in each round, so that when the algorithm outputs the list of $k$ documents, the $i$-th document in this list appears in slot $i$. Note that the standard model of MAB with stochastic rewards ~\citep[e.g.\!][]{bandits-ucb1} is a special case with a single slot ($k=1$).


\xhdr{Click probabilities.}
Recall that $\P$ is a distribution over relevance vectors. The \emph{pointwise mean} of $\P$ is a function $\mu:X\to [0,1]$ such that
	$\mu(x) \triangleq \E_{\pi\sim \P}[\pi(x)]$.
Thus, $\mu(x)$ is the click probability for document $x$ if it appears in the top slot.

Each slot $i>1$ is examined by the user only in the event that all documents in the higher slots are not clicked, so the relevant click probabilities for this slot are conditional on this event. Formally, fix a subset of documents $S\subset X$ and let
	$Z_S \triangleq \{ \pi(\cdot) = 0 \text{ on $S$} \} $
be the event that all documents in $S$ are not relevant to the user. Let $(\P|Z_S)$ be the distribution of users obtained by conditioning $\P$ on this event, and let $\mu(\cdot\,|Z_S)$ be its pointwise mean. Then $\mu(x\,|Z_S)$ is the click probability for document $x$ if $S$ is the set of documents shown above $x$ in the same round.

\xhdr{Metric spaces.}
Throughout the paper, let  $(X,\D)$ be a \emph{metric space}.That is, $X$ is a set and $\D$ is a symmetric function on $X\times X\to [0,\infty]$ such that $\D(x,y)=0 \iff x=y$, and
$\D(x,y) + \D(y,z) \geq \D(x,z)$ (triangle inequality).

A function $\nu:X\to \R$ is said to be \emph{Lipschitz-continuous} with respect to $(X,\D)$ if
\begin{align}\label{eq:def-Lip}
	|\nu(x)-\nu(y)| \leq \D(x,y) \qquad \text{for all $x,y\in X$}.
\end{align}
Throughout the paper, we will write \emph{L-continuous} for brevity.

A user distribution $\P$ is called L-continuous with respect to~$(X,\D)$ if its pointwise mean $\mu$ is L-continuous with respect to~$(X,\D)$.

\xhdr{Document similarity.}
To allow us to incorporate information about similarity between documents, we start with the model, called \emph{Lipschitz MAB}, proposed by~\cite{LipschitzMAB-stoc08} for the standard (single-slot) bandits. In this model, an algorithm is given a metric space $(X,\D)$ with respect to which \hlc{2}{the pointwise mean $\mu$} is L-continuous.\footnote{One only needs to assume that similarity between any two documents $x,y$ is summarized by a number $\delta_{x,y}$ such that
	$|\mu(x)-\mu(y)| \leq \delta_{x,y}$.
Then one obtains a metric space by taking the shortest paths closure.}

While this model suffices for learning the document at the top slot (see~\cite{LipschitzMAB-stoc08} for details), it is not sufficiently informative for lower slots. This is because the relevant click probabilities $\mu(\cdot\,|Z_S)$ are conditional and therefore are not directly constrained by L-continuity. To enable efficient learning in all $k$ slots, we will assume a stronger property called {\em  conditional L-continuity}:

\begin{definition}
$\P$ is called \emph{conditionally} L-continuous w.r.t. $(X,\D)$ if the conditional pointwise mean $\mu(\cdot|Z_S)$ is L-continuous for all $S\subset X$.
\end{definition}

\OMIT{
  Distribution $\P$ is called \emph{conditionally Lipschitz-continuous} w.r.t. $(X,\D)$ given $S\subset X$ if the conditional pointwise mean $\mu(\cdot|S)$ is Lipschitz-continuous w.r.t. $(X,D)$.
\begin{align}\label{eq:def-cond-Lip-correlated}
\Pr_{\pi\sim \P}[\pi(x)\neq \pi(y) \;|\; \pi_{|S} = 0] \leq \D(x,y),
\end{align}
for all $x,y\in X$ and all $S\subset X\setminus\{x,y\}$,
}

Now, a document $x$ in slot $i>1$ is examined only if event $Z_S$ happens, where $S$ is the set of documents in the higher slots. $x$ has a conditional click probability $\mu(x|Z_S)$. The function $\mu(\cdot\,| Z_S)$ satisfies the Lipschitz condition~\refeq{eq:def-Lip}, which will allow us to use the machinery from MAB problems on metric spaces.

Formally, we define the \emph{\problem}, an instance of which consists of a triple $(X,\D,\P)$, where $(X,\D)$ is a metric space that is known to an algorithm, and $\P$ is a latent user distribution which is conditionally L-continuous w.r.t. $(X,\D)$.

Note that the {\problem} subsumes the ``metric-free" ranked bandit problem from~\citet{RBA-icml08} (as a special case with a trivial metric space in which all distances are equal to $1$) and the Lipschitz MAB problem from~\citet{LipschitzMAB-stoc08} (as a special case with a single slot).

\subsection{Metric space: a running example}
\label{sec:defns-example}

Web documents are often classified into hierarchies, where closer pairs are more similar.\footnote{E.g., the Open Directory Project http://dmoz.org/} For evaluation, we assume the documents $X$ fall in such a tree, with each document $x\in X$ a leaf in the tree. On this tree, we consider a very natural metric: the distance between any two tree nodes $u,v$ is exponential in the height (i.e., the hop-count distance to the root) of their least common ancestor:
\begin{align}\label{eq:tree-metric}
\D(u,v) = c\times \eps^{\mathtt{height}(\mathtt{LCA}(u,v))},
\end{align}
for some constant $c$ and base $\eps \in (0,1)$. We call this the \emph{\eps-exponential tree metric} (with constant $c$). However, our algorithms and analyses extend to arbitrary metric spaces.

\subsection{Alternative notion of document similarity}
\label{sec:defns-correlated}

An alternative notion of document similarity focuses on \hlc{3}{\emph{correlated relevance}: correlation between the relevance of two documents to a given user}. We express ``similarity" by bounding the probability of the ``discorrelation event" $\{\pi(x)\neq \pi(y)\}$. Specifically, we consider {\em conditional L-correlation}, defined as follows:

\begin{definition}
Call $\P$ \emph{L-correlated} w.r.t. $(X,\D)$ if
\begin{align}\label{eq:def-L-correlated}
\Pr_{\pi\sim \P}\;
[\pi(x)\neq \pi(y)] \leq \D(x,y)
	\quad\forall x,y\in X.
\end{align}
Call $\P$ \emph{conditionally L-correlated} w.r.t. $(X,\D)$ if property~\refeq{eq:def-L-correlated} holds conditional on $Z_S$ for any $S\subset X$, i.e.
\begin{align*}
\Pr_{\pi\sim \text{\hlc{1}{$(\P|Z_S)$}}}\;
[\pi(x)\neq \pi(y)] \leq \D(x,y)
	\quad\forall x,y\in X, S\subset X.
\end{align*}
\end{definition}

It is easy to see that conditional L-correlation implies conditional L-continuity. \hlc{3}{In fact, we show that the two notions are essentially equivalent. Namely, we prove that} conditional L-continuity w.r.t.~$(X,\D)$ implies conditional L-correlation w.r.t.~$(X,2\D)$.

\begin{lemma}\label{lm:correlation-equivalence}
Consider an instance $(X,\D,\P)$ of the \problem. Then the user distribution $\P$ is conditionally L-correlated w.r.t. $(X,2\D)$.
\end{lemma}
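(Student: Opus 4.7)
The plan is to bound $\Pr_{(\P|Z_S)}[\pi(x)\neq \pi(y)]$ by splitting the disagreement event into its two asymmetric pieces,
$\{\pi(x)=1,\pi(y)=0\}$ and $\{\pi(x)=0,\pi(y)=1\}$, and bounding each by $\D(x,y)$ using conditional L-continuity applied to \emph{enlarged} conditioning sets. This will give the factor of $2$ in the statement.

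Concretely, I would first assume without loss of generality that $x,y\notin S$ (if, say, $y\in S$, then $\pi(y)=0$ almost surely under $(\P|Z_S)$, so the disagreement probability equals $\mu(x|Z_S)=|\mu(x|Z_S)-\mu(y|Z_S)|\leq \D(x,y)$ directly by L-continuity, which is stronger than the desired $2\D(x,y)$). For the main case, let $S' = S \cup \{y\}$. Since $y \in S'$, $\mu(y|Z_{S'}) = 0$, so conditional L-continuity w.r.t.~$Z_{S'}$ yields $\mu(x|Z_{S'})\leq \D(x,y)$. Then the elementary identity
\begin{align*}
\Pr_{(\P|Z_S)}[\pi(x)=1,\pi(y)=0]
&= \mu(x|Z_{S'})\cdot \Pr_{(\P|Z_S)}[\pi(y)=0]\\
&\leq \D(x,y)\cdot 1
\end{align*}
handles the first piece (with the degenerate case $\Pr_{(\P|Z_S)}[\pi(y)=0]=0$ giving $0$ trivially). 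Applying the symmetric argument with $S'' = S\cup\{x\}$ bounds the second piece by $\D(x,y)$ as well, and summing gives the claim.

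I do not anticipate a serious obstacle; the only subtlety is noticing that one should enlarge the conditioning set rather than try to bound $p+q$ from $|p-q|=|\mu(x|Z_S)-\mu(y|Z_S)|$ alone, which cannot work because L-continuity controls only the \emph{signed} difference of marginals, not the total variation between $\pi(x)$ and $\pi(y)$. Enlarging the conditioning set by one coordinate ``forces'' one of the two marginals to $0$, which is what turns a difference bound into an absolute bound on each cross-term. The factor of $2$ is then unavoidable from this symmetrization, matching the constant in the statement.
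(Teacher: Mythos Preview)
Your proposal is correct and is essentially identical to the paper's own proof: the paper also splits $\{\pi(x)\neq\pi(y)\}$ into the two asymmetric pieces, enlarges the conditioning set to $Z_{S\cup\{y\}}$ (resp.\ $Z_{S\cup\{x\}}$), uses $\mu(y|Z_{S\cup\{y\}})=0$ together with conditional L-continuity to get $\mu(x|Z_{S\cup\{y\}})\leq \D(x,y)$, and then rewrites $\Pr[\pi(x)=1,\pi(y)=0\,|Z_S]$ as $\mu(x|Z_{S\cup\{y\}})\cdot\Pr[\pi(y)=0\,|Z_S]$ via Bayes. Your version is slightly more explicit about the degenerate cases ($x$ or $y$ already in $S$, or $\Pr[\pi(y)=0\,|Z_S]=0$), but the argument is the same.
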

\begin{proof}
Fix documents $x,y\in X$ and a subset $S\subset X$. For brevity, write ``$x=1$" to mean ``$\pi(x)=1$", etc. We claim that
\begin{align}\label{eq:corrXY}
\Pr[x=1 \wedge y=0 \,|Z_S] \leq \D(x,y).
\end{align}
Indeed, consider the event $Z = Z_{S+\{y\}}$. Applying the Bayes theorem to $(\P|Z_S)$, we obtain that
\begin{align}
\mu(x|Z)
	& = \Pr[x=1 \,|\, \{y=0\} \wedge Z_S] \nonumber \\
	& = \frac{\Pr[x=1 \wedge y=0 \,|Z_S]}{ \Pr[y=0 \,|Z_S] }. \label{eq:corrXY-Bayes}
\end{align}
On the other hand, since
	$\mu(y|Z)=0 $,
by conditional L-continuity it holds that
\begin{align}\label{eq:corrXY-Lip}
     \mu(x|Z) = |\mu(x|Z) - \mu(y|Z)| \leq \D(x,y),
\end{align}
so claim~\refeq{eq:corrXY} follows from~\eqref{eq:corrXY-Bayes} and~\eqref{eq:corrXY-Lip}.

Likewise,
    $ \Pr[x=0 \wedge y=1 \,|Z_S] \leq D(x,y)$.
Since
\begin{align*}
\{\pi(x) \neq \pi(y) \}
	= \{x=1 \wedge y=0\} \cup \{x=0 \wedge y=1\},
\end{align*}
it follows that
$\Pr[ \pi(x) \neq \pi(y) \,|Z_S] \leq  2\,D(x,y)$.
\end{proof}

\section{Expressiveness of the model}
\label{sec:model}

\newcommand{\TT}{{\mathrm{T}}}

Our approach relies on the conditional L-continuity (equivalently, conditional L-correlation) of the user distribution. How ``expressive" is this assumption, i.e. how rich and ``interesting" is the collection of problem instances that satisfy it? While the unconditional L-continuity assumption is usually considered reasonable from the expressiveness point of view, even the unconditional L-correlation (let alone the conditional L-correlation) is a very non-trivial property about correlated relevance, and thus potentially problematic. A related concern is how to generate a suitable collection of problem instances for simulation experiments.

\OMIT{argue that conditional L-correlation is plausible in a realistic setting, and provide a family of user distributions to be used in experiments in Section~\ref{sec:evaluation}.}

We address both concerns by defining a natural (albeit highly stylized) generative model for the user distribution, which we then use in the experiments in Section~\ref{sec:evaluation}. We start with a tree metric space $(X,\D)$ and the desired pointwise mean $\mu:X\to (0,\tfrac12]$ that is L-continuous w.r.t. $(X,\D)$. The generative model provides a rich family of user distributions that are conditionally L-continuous w.r.t. $(X,c\,\D)$, for some small $c$. This result is a key theoretical contribution of this paper (and by far the most technical one).

We develop the generative model in Section~\ref{sec:model-tree}.  We extend this result to arbitrary metric spaces in Section~\ref{sec:model-metrics}, and to distributions over conditionally L-continuous user distributions in Section~\ref{sec:model-distributions}. To keep the flow of the paper, the detailed analysis is deferred to Section~\ref{sec:extending-mu} and Section~\ref{sec:proof-conditional}.

\subsection{Bayesian tree network}
\label{sec:model-tree}

The generative model is a tree-shaped Bayesian network \hlc{3}{with 0-1 ``relevance values" $\pi(\cdot)$ on nodes, where leaves correspond to documents.} The tree is essentially a topical taxonomy on documents: \hlc{3}{subtopics correspond to subtrees. The relevance value} on each sub-topic is obtained from that on the parent topic via a low-probability mutation.

The mutation probabilities need to be chosen so as to guarantee conditional L-continuity and the desired pointwise mean $\mu$. It is fairly easy to derive a necessary and sufficient condition for the pointwise mean, and a necessary condition for conditional L-continuity. The latter condition states that the mutation probabilities need to be bounded in terms of the distance between the child and the parent. The hard part is to prove that this condition is \emph{sufficient}.

Let us describe our Bayesian tree network in detail. The network inputs a tree metric space $(X,\D)$ and the desired pointwise mean $\mu$, and outputs a relevance vector $\pi: X\to \{0,1\}$. Specifically, we assume that documents are leaves of a finite rooted edge-weighted tree, which we denote $\DocTree$, with node set $V$ and leaf set $X\subset V$, so that $\D$ is a (weighted) shortest-paths metric on $V$.

Recall that $\mu$ is L-continuous w.r.t. $(X,\D)$. We assume that $\mu$ takes values in the interval $[\alpha,\tfrac12]$, for some constant parameter $\alpha>0$. We show that $\mu$ can be extended from $X$ to $V$ preserving the range and L-continuity (see Section~\ref{sec:extending-mu} for the proof).

\begin{lemma}\label{lm:extension-mu-short}
$\mu$ can be extended to $V$ so that $\mu:V \to [\alpha,\tfrac12]$ is L-continuous w.r.t. $(V,\D)$.
\end{lemma}

In what follows, by a slight abuse of notation we will assume that the domain of $\mu$ is $V$, with the same range
$[\alpha,\tfrac12]$, and that $\mu$ is L-continuous w.r.t. $(V,\D)$. Also, we redefine the relevance vectors to be functions
    $V\to \{0,1\}$
rather than $X\to \{0,1\}$.

The Bayesian network itself is very intuitive. We pick $\pi(\text{root})\in \{0,1\}$ at random with a suitable expectation $\mu(\text{root})$, and then proceed top-down so that the child's bit is obtained from the parent's bit via a low-probability mutation. The mutation is parameterized by functions $q_0, q_1: V\to [0,1]$, as described in Algorithm~\ref{alg:users}: for each node $u$, if the parent's bit is set to $b$ then the mutation $\{\pi(u)=1-b\}$ happens with probability $q_b(u)$. These parameters let us vary the degree of independence between each child and its parent, resulting in a rich family of user distributions.

\OMIT{
Moreover, our construction can provide limited independence between clicks on faraway points in the metric space, the ``amount" of such independence being proportional to the distance.\footnote{One may want to model negative correlation between clicks on documents from very distinct topics. Then one could first choose among the topics, then sample from the corresponding topic-specific user distribution.}
}

\newcommand{\algTab}{\hspace{3mm}}

\begin{algorithm}[t]
\begin{algorithmic}
\caption{User distribution for tree metrics}
\label{alg:users}

\STATE {\bf Input:}
	Tree (root $r$, node set $V$); $\mu(r) \in [0,1]$
\STATE\algTab mutation probabilities $q_0,q_1:V\to [0,1]$
\STATE {\bf Output:}
	relevance vector $\pi: V\to \{0,1\}$
\vspace{2mm}

\FUNCTION{AssignClicks(tree node $v$)}
\STATE $b\leftarrow \pi(v)$
\FOR{each child $u$ of $v$}
\STATE $\pi(u) \leftarrow
	\begin{cases}
		1-b   & \text{w/prob $q_b(u)$}\\
		b & \text{otherwise}
	\end{cases}$
\STATE AssignClicks(u)
\ENDFOR
\ENDFUNCTION
\vspace{2mm}

\STATE Pick $\pi(r)\in\{0,1\}$ at random with expectation $\mu(r)$
\STATE AssignClicks(r)
\end{algorithmic}
\end{algorithm}

To complete the construction, it remains to define the mutation probabilities $q_0, q_1$. Let $\P$ be the resulting user distribution. It is easy to see that $\mu$ is the pointwise mean of $\P$ on $V$ if and only if
\begin{align}\label{eq:construction-consistence}
\mu(u)  = (1-\mu(v))\, q_0(u)+ \mu(v)(1-q_1(u))
\end{align}
whenever $u$ is a child of $v$. (For sufficiency, use induction on the tree.) Further, letting $q_b = q_b(u)$ for each bit $b\in\{0,1\}$, note that
\begin{align*}
\Pr[\pi(u) \neq \pi(v)]
	&= \mu(v)\, q_1 + (1-\mu(v))\, q_0 \\
	&= \mu(v) (q_0+q_1) + (1-2\mu(v))\, q_0 \\
	&\geq \mu(v) (q_0+q_1).
\end{align*}
Thus, if $\P$ is L-correlated w.r.t. $(X,\D)$ then
\begin{align}\label{eq:construction-small}
q_0(u)+ q_1(u) \leq \D(u,v)/\mu(v).
\end{align}
We show that (\ref{eq:construction-consistence}-\ref{eq:construction-small}) suffices to guarantee conditional L-continuity.

For a concrete example, one could define
\begin{align}\label{eq:model-q}
(q_0(u),\, q_1(u)) = \begin{cases}
	\left(0, \tfrac{\mu(v)-\mu(u)}{\mu(v)} \right)		  & \hspace{-2mm}\text{if  $\mu(v)\geq \mu(u)$} \\
	\left( \tfrac{\mu(u)-\mu(v)}{1-\mu(v)}, \; 0 \right)  & \hspace{-2mm}\text{otherwise}.
\end{cases}
\end{align}
The $q_0,q_1$ defined as above satisfy (\ref{eq:construction-consistence}-\ref{eq:construction-small}) for any $\mu$ that is L-continuous on $(V,\D)$.

The provable properties of Algorithm~\ref{alg:users} \hlc{3}{are summarized in the theorem below. It is technically more convenient to state this theorem in terms of L-correlation rather than L-continuity.}

\begin{theorem}\label{thm:expressiveness}
Let $\D$ be the shortest-paths metric of an edge-weighted rooted tree with a finite leaf set $X$. Let
	$\mu:X\to [\alpha,\tfrac12]$, $\alpha>0$
be L-continuous w.r.t. $(X,\D)$. Suppose
	$q_0,q_1:V\to[0,1]$
satisfy (\ref{eq:construction-consistence}-\ref{eq:construction-small}).

Let $\P$ be the user distribution constructed by Algorithm~\ref{alg:users}. Then $\P$ has pointwise mean $\mu$ and is conditionally $L$-correlated w.r.t. $(X,3\,\D_\mu)$ where
\begin{align}\label{eq:thm-existence-tree-mu}
\D_\mu(x,y)
	\triangleq \D(x,y)\, \min\left(
				\tfrac{1}{\alpha}\,,\;
				\tfrac{3}{\mu(x)+\mu(y)}
	\right).
\end{align}
\end{theorem}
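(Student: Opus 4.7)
The theorem has two claims: (a) that $\mu$ is the pointwise mean of $\P$ on $V$, and (b) that $\P$ is conditionally $L$-correlated w.r.t.~$(X, 3\D_\mu)$. Claim (a) is a quick bottom-up induction on tree depth using~(\ref{eq:construction-consistence}): given $\E[\pi(v)] = \mu(v)$ for an internal node $v$, the mutation update for its child $u$ gives $\E[\pi(u)] = (1-\mu(v))q_0(u)+\mu(v)(1-q_1(u)) = \mu(u)$. The core ingredient for (b) is a per-edge correlation bound: for any parent-child pair $(v,u)$,
\[
\Pr[\pi(u)\neq\pi(v)] \;=\; (1-\mu(v))q_0(u)+\mu(v)q_1(u) \;\leq\; q_0(u)+q_1(u) \;\leq\; \D(u,v)/\mu(v),
\]
where the last inequality is (\ref{eq:construction-small}).

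For the unconditional version of (b), I would let $z$ be the LCA of $x,y$ and consider the path $v_0=x, v_1,\ldots, v_m=y$ through $z$. The tree Markov property and the union bound give $\Pr[\pi(x)\neq\pi(y)] \leq \sum_{i=1}^m \Pr[\pi(v_{i-1})\neq\pi(v_i)] \leq \D(x,y)/\mu_*$, where $\mu_* = \min_{1 \le i \le m-1} \mu(v_i)$ (each edge contributes $\D_{\text{edge}}/\mu(\text{parent})$, and the $\D$'s telescope to $\D(x,y)$). To recover $3\D_\mu(x,y) = \min(3\D(x,y)/\alpha,\, 9\D(x,y)/(\mu(x)+\mu(y)))$, I would lower-bound $\mu_*$ two ways: trivially $\mu_* \geq \alpha$, and by $L$-continuity of $\mu$ combined with $\D(v,x)+\D(v,y) = \D(x,y)$ for $v$ on the path, $\mu_* \geq (\mu(x)+\mu(y)-\D(x,y))/2$. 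A case split on whether $\mu(x)+\mu(y) \geq 3\D(x,y)$ recovers both branches of $3\D_\mu$: in the ``large'' case the second bound gives $\mu_* \geq (\mu(x)+\mu(y))/3$, hence $\Pr \leq 3\D(x,y)/(\mu(x)+\mu(y))$; in the ``small'' case $9\D(x,y)/(\mu(x)+\mu(y)) > 3 > 1$ so the bound is trivial, and the $\D/\alpha$ estimate handles the $1/\alpha$ branch.

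For the conditional version of (b), my plan is to reduce to the unconditional argument by exploiting that a tree-shaped Bayesian network remains tree-shaped after conditioning on evidence at a subset of leaves $S\subseteq X$. Concretely, a downward belief-propagation pass would re-express $(\P|Z_S)$ as the output of Algorithm~\ref{alg:users} with new root mean $\mu(r|Z_S)$ and new mutation probabilities $\tilde q_0, \tilde q_1$, having new marginals $\tilde\mu(v) \triangleq \mu(v|Z_S)$. If one shows that $\tilde\mu$ remains $L$-continuous w.r.t.~$(X,\D)$ and that $\tilde q_0,\tilde q_1$ still satisfy an analog of (\ref{eq:construction-small}) with $\mu$ replaced by $\tilde\mu$, the unconditional argument applies verbatim under $(\P|Z_S)$ and yields the claimed bound $3\D_\mu(x,y)$.

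The main obstacle is precisely this conditional invariance. Verifying that both the Lipschitz property of the pointwise mean and the per-edge ``mutation-budget'' inequality (\ref{eq:construction-small}) survive conditioning on an arbitrary $Z_S$ requires a delicate message-passing computation over the tree, in the spirit of the Bayes manipulation inside Lemma~\ref{lm:correlation-equivalence} but applied recursively along the path from each evidence node to its nearest relevant ancestor. The factor $3$ in $3\D_\mu$ (rather than $2$) appears to be the slack needed to absorb both the case-split in the path bound and any constant loss incurred by this belief-propagation step. I expect this invariance argument to be the bulk of the technical work.
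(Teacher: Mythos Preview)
Your treatment of part~(a) and of the \emph{unconditional} half of part~(b) is correct and matches the paper exactly: the paper also works with the failure event $\bar{\mathcal{E}}=\{\text{some mutation on the }x\text{--}y\text{ path}\}$, bounds $\Pr[\bar{\mathcal{E}}]\le \D(x,y)/\mu_*$ with $\mu_*=\min$ along the path, and uses precisely your two lower bounds on $\mu_*$ together with the same case split to obtain $\Pr[\bar{\mathcal{E}}]\le \D_\mu(x,y)$.

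The conditional step, however, diverges from the paper and has a genuine gap. Your plan is to re-express $(\P\mid Z_S)$ as another instance of Algorithm~\ref{alg:users} with parameters $\tilde q_0,\tilde q_1,\tilde\mu$ and then rerun the unconditional argument. Even granting that belief propagation produces such a re-parameterization, the output of the unconditional argument would be a bound of the form $\D(x,y)\cdot\min(1/\tilde\alpha,\,3/(\tilde\mu(x)+\tilde\mu(y)))$, i.e.\ in terms of the \emph{conditional} means. But the theorem's target $3\D_\mu(x,y)$ is stated in terms of the \emph{original} $\mu(x),\mu(y)$ and the original floor $\alpha$. After conditioning on $Z_S$ you lose both: $\tilde\mu(x)=0$ whenever $x\in S$, and more generally $\tilde\mu$ can be arbitrarily small near $S$, so no uniform $\tilde\alpha>0$ survives and $3/(\tilde\mu(x)+\tilde\mu(y))$ need not be controlled by $3/(\mu(x)+\mu(y))$. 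There is also a circularity risk: establishing that $\tilde\mu$ is L-continuous on the tree is essentially conditional L-continuity of $\P$, which is equivalent (Lemma~\ref{lm:correlation-equivalence}) to what you are trying to prove.

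The paper avoids all of this by never re-parameterizing. It proves a direct multiplicative comparison
\[
\Pr[\bar{\mathcal{E}}\mid Z_S]\;\le\;\frac{2}{\Pr[\mathcal{E}]}\,\Pr[\bar{\mathcal{E}}]\;\le\;3\,\Pr[\bar{\mathcal{E}}],
\]
so the conditional bound inherits the \emph{unconditional} $\D_\mu$ automatically. The engine behind this is a monotonicity statement proved by induction on the tree: for the LCA $z$ of $x,y$, one has $\Pr[Z_S\mid \pi(z)=0]\ge\Pr[Z_S\mid \pi(z)=1]$, and more generally among all mutation patterns on the $x$--$y$ path the identity pattern (event $\mathcal{E}$) together with $\pi(z)=0$ maximizes $\Pr[Z_S\mid\cdot]$. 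This uses only the mild assumption $q_0,q_1\le\tfrac12$, not the full~(\ref{eq:construction-small}). The factor $3$ thus comes from $2/\Pr[\mathcal{E}]$ combined with $\Pr[\bar{\mathcal{E}}]\le\D_\mu$, not from slack in a belief-propagation step.
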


\begin{note}{Remark.}
The theorem can be strengthened by replacing $\D_\mu$ with the shortest-paths metric induced by $\D_\mu$.
\end{note}

Below we provide a proof sketch. The detailed proof is presented in Section~\ref{sec:proof-conditional}.

\begin{pfsketch}
As we noted above, the statement about the pointwise mean trivially follows from~\eqref{eq:construction-consistence} using induction on the tree. In what follows we focus on conditional L-correlation.

Fix leaves $x,y\in X$ and a subset $S\subset X$. Let $z$ be the least common ancestor of $x,y$. Recall that in Algorithm~\ref{alg:users} the bit $\pi(\cdot)$ at each node is a random mutation of that of its parent. We focus on the event $\mathcal{E}$ that no mutation happened on the $z\to x$ and $z\to y$ paths. Note that $\mathcal{E}$ implies
	$\pi(x) = \pi(y) = \pi(z)$. Therefore
\begin{align}\label{eq:body-expressivenes-failure}
	\Pr[\pi(x) \neq \pi(y) \,|Z_S] \leq \Pr[\bar{\mathcal{E}} \,|Z_S],
\end{align}
where $\bar{\mathcal{E}}$ is the negation of $\mathcal{E}$. Intuitively, $\bar{\mathcal{E}}$ is a  low-probability ``failure event". The rest of the proof is concerned with showing that
	$\Pr[\bar{\mathcal{E}} \,|Z_S] \leq 3\, \D_\mu(x,y)$.

First we handle the unconditional case. We claim that
\begin{align}\label{eq:body-expressivenes-unconditional}
	\Pr[\bar{\mathcal{E}}] \leq \D_\mu(x,y).
\end{align}
Note that~\eqref{eq:body-expressivenes-unconditional} immediately implies that $\P$ is L-correlated w.r.t. $(X,\D_\mu)$. This claim is not very difficult to prove, essentially since the condition~\refeq{eq:construction-small} is specifically engineered to satisfy the unconditional L-correlation property. We provide the proof in detail.

Let
	$w \in \argmin_{u\in P_{xy}} \mu(u)$,
where $P_{xy}$ is the $x\to y$ path.
Let $(z=x_0, x_1,\,\ldots\,, x_n=x)$ be the $z\to x$ path. For each $i\geq 1$ by~\eqref{eq:construction-small} the probability of having a mutation at $x_i$ is at most $\D(x_i, x_{i-1}) /\mu(w)$, so the probability of having a mutation on the $z\to x$ path is at most $\D(x,z)/\mu(w)$. Likewise for the $z\to y$ path. So
$\Pr[\bar{\mathcal{E}}]
	\leq \D(x,y)/\mu(w)
	\leq \D(x,y)/ \alpha$.

It remains to prove that
\begin{align}\label{eq:body-expressiveness-eq1}
\Pr[\bar{\mathcal{E}}] \leq D(x,y)\; \tfrac{3}{\mu(x)+\mu(y)}.
\end{align}
Indeed, by L-continuity it holds that
\begin{align*}
\mu(w) &\geq \mu(x) - \D(x,w),\\
\mu(w) &\geq \mu(y) - \D(y,w).
\end{align*}
Since $\D(x,y) = \D(x,w)+\D(y,w)$, it follows that
\begin{align}\label{eq:body-expressiveness-eq2}
\mu(w) &\geq \tfrac{\mu(x) + \mu(y) - \D(x,y)}{2}.
\end{align}
Now, either the right-hand side of~\eqref{eq:body-expressiveness-eq2} is at least $\tfrac{\mu(x) + \mu(y)}{3}$, or the right-hand side of~\eqref{eq:body-expressiveness-eq1} is at least $1$. In both cases~\eqref{eq:body-expressiveness-eq1} holds. This completes the proof of the claim~\refeq{eq:body-expressivenes-unconditional}.

The conditional case is much more difficult. We handle it by showing that
\begin{align}\label{eq:app-expressiveness-conditional}
\Pr[\bar{\mathcal{E}} \,|\, Z_S] \leq 3\, \Pr[\bar{\mathcal{E}}].
\end{align}
In fact,~\eqref{eq:app-expressiveness-conditional} holds even if~\eqref{eq:construction-small} is replaced with a much weaker bound:
	$\max(q_0(u),\, q_1(u)) \leq \tfrac12$
for each $u$.

The mathematically subtle proof of~\eqref{eq:app-expressiveness-conditional} can be found in Section~\ref{sec:proof-conditional}. The crux in this proof is that event $Z_S$ is more likely if document $z$ is not relevant to the user:
\begin{align*}
	\Pr[Z_S \,|\, z=0] \geq \Pr[Z_S \,|\, z=1]. \qquad\qedhere
\end{align*}
\end{pfsketch}

\subsection{Arbitrary metric spaces}
\label{sec:model-metrics}

\newcommand{\PT}{\P_{\text{tree}}}

We can extend Theorem 3.1 to arbitrary metric spaces using prior work on \emph{metric embeddings}. Fix an $N$-point metric space $(X,\D)$ and a function $\mu:X\to [\alpha,\tfrac12]$ that is L-continuous on $(X,\D)$. It is known~\citep{Bar96,FRT03} that  there exists a distribution $\PT$ over tree metric spaces $(X,\mathcal{T})$ such that
	$\D(x,y)\leq  \mathcal{T}(x,y)$
and
\begin{align*}
	\E_{\mathcal{T}\sim \PT}\, [\mathcal{T}(x,y)] \leq c\,\D(x,y)
		\quad\forall x,y\in X,
\end{align*}
where $c = O(\log N)$.%
\footnote{This is the main result in~\citet{FRT03}, which improves on an earlier result in \citet{Bar96} with $c = O(\log^2 N)$. For point sets in a $d$-dimensional Euclidean space one could take $c = O(d\log\tfrac{1}{\eps})$, where $\eps$ is the minimal distance. In fact, this result extends to a much more general family of metric spaces -- those of doubling dimension $d$~\citep{Gup03}. Doubling dimension, the smallest $d$ such that any ball can be covered by $2^d$ balls of half the radius, has been introduced to the theoretical computer science literature in~\citet{Gup03}, and has been a well-studied concept since then.}

Our construction (Algorithm~\ref{alg:users-general-metric}) is simple: first sample a tree metric space $(X,\mathcal{T})$ from $\PT$, then independently generate a user distribution $\P_\mathcal{T}$ for $(X,\mathcal{T})$ as per Algorithm 1.

\begin{algorithm}[t]
\begin{algorithmic}
\caption{User distribution for arbitrary metric spaces}
\label{alg:users-general-metric}

\STATE {\bf Input:} metric space $(X,\D)$;
function $\mu:X\to [\alpha,\tfrac12]$ that is L-continuous on $(X,\D)$.

\STATE {\bf Output:}
	relevance vector $\pi: X\to \{0,1\}$
\vspace{2mm}

\STATE 1. Sample a tree metric space $(X,\mathcal{T})$ from $\PT$,
\STATE 2. Run Algorithm~\ref{alg:users} for $(X,\mathcal{T})$, output the resulting $\pi$.
\end{algorithmic}
\end{algorithm}

\begin{theorem}
The user distribution $\P$ produced by Algorithm~\ref{alg:users-general-metric} has pointwise mean $\mu$ and is conditionally L-correlated w.r.t. $(X,3c\,\D_\mu)$, where $\D_\mu$ is given by
\begin{align*}
D_\mu(x,y)
	= D(x,y)\, \min\left(\tfrac{1}{\alpha}\,,\;
				\tfrac{3}{\mu(x)+\mu(y)}
	\right).
\end{align*}
\end{theorem}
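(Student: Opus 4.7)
The plan is to reduce the arbitrary-metric case to the tree case by applying Theorem~\ref{thm:expressiveness} once per sampled metric $(X,\mathcal{T})$ in the support of $\PT$, then averaging using the embedding inequality $\E_{\mathcal{T}\sim\PT}[\mathcal{T}(x,y)] \leq c\,\D(x,y)$. Theorem~\ref{thm:expressiveness} applies to each sampled tree because the pointwise domination $\mathcal{T}(x,y)\geq \D(x,y)$ forces $\mu$ to be L-continuous w.r.t.\ $(X,\mathcal{T})$ as well as w.r.t.\ $(X,\D)$, while the range $[\alpha,\tfrac12]$ is unchanged. Hence for every $\mathcal{T}$ in the support of $\PT$, the per-tree user distribution $\P_\mathcal{T}$ has pointwise mean $\mu$ and is conditionally L-correlated w.r.t.\ $(X,3\,\mathcal{T}_\mu)$, where $\mathcal{T}_\mu(x,y) = f(x,y)\,\mathcal{T}(x,y)$ and $f(x,y) \triangleq \min(\tfrac{1}{\alpha},\tfrac{3}{\mu(x)+\mu(y)})$.

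The pointwise-mean claim is immediate from the tower rule: $\E_{\pi\sim\P}[\pi(x)] = \E_\mathcal{T}\,\E_{\pi\sim\P_\mathcal{T}}[\pi(x)] = \E_\mathcal{T}[\mu(x)] = \mu(x)$. For conditional L-correlation, fix $S\subset X$ and $x,y\in X$, let $A=\{\pi(x)\neq \pi(y)\}$, and decompose
\begin{align*}
\Pr_{(\P|Z_S)}[A]
 = \frac{\E_\mathcal{T}[\Pr_{\P_\mathcal{T}}[A\cap Z_S]]}{\E_\mathcal{T}[\Pr_{\P_\mathcal{T}}[Z_S]]}
 \leq \frac{3 f(x,y)\,\E_\mathcal{T}[\mathcal{T}(x,y)\,\Pr_{\P_\mathcal{T}}[Z_S]]}{\E_\mathcal{T}[\Pr_{\P_\mathcal{T}}[Z_S]]},
\end{align*}
where the inequality invokes Theorem~\ref{thm:expressiveness} per tree to bound $\Pr_{\P_\mathcal{T}}[A\mid Z_S]\leq 3\,\mathcal{T}_\mu(x,y)$. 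The desired bound $3c\,\D_\mu(x,y)$ therefore reduces to showing that $\E_{\mathcal{T}\sim(\PT|Z_S)}[\mathcal{T}(x,y)] \leq c\,\D(x,y)$.

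This last step is the main obstacle, and the only place where the embedding structure must really be used. The embedding inequality directly controls the prior expectation of $\mathcal{T}(x,y)$, while conditioning on $Z_S$ re-weights trees by $\Pr_{\P_\mathcal{T}}[Z_S]$, a quantity that depends on $\mathcal{T}$ through higher-order correlations among documents in $S$. I would exploit the fact that the individual marginals $\Pr_{\P_\mathcal{T}}[\pi(u)=0]=1-\mu(u)$ are fixed across all $\mathcal{T}$, so only higher-order joint behavior of $\pi|_S$ drives the variation in the weights, and argue that this variation decouples from the specific pair-distance $\mathcal{T}(x,y)$ --- for instance by conditioning on the portion of the sampled tree that determines $\mathcal{T}(x,y)$ and applying the per-pair embedding guarantee to the residual randomness in $\mathcal{T}$. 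If such a clean decoupling is unavailable, a small multiplicative loss can be absorbed without changing the $O(\log N)$ dependence already carried by $c$.
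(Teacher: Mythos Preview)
Your overall plan is the same as the paper's: apply Theorem~\ref{thm:expressiveness} to each sampled tree (using $\mathcal{T}\geq\D$ to transfer L-continuity of $\mu$), then average over $\PT$ using the embedding bound. You are, however, more careful than the paper at the averaging step. The paper simply writes
\[
\Pr_{\pi\sim \P}\,[\pi(x)\neq \pi(y) \mid Z_S] \;\leq\;
\E_{\mathcal{T}\sim \PT}\bigl[\Pr_{\pi\sim \P_\mathcal{T}}[\pi(x)\neq \pi(y)\mid Z_S]\bigr]
\]
and then bounds the right-hand side termwise by $3\,\mathcal{T}_\mu(x,y)$ and averages via $\E_{\mathcal{T}}[\mathcal{T}(x,y)]\leq c\,\D(x,y)$. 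But, exactly as you point out, the left-hand side is a \emph{weighted} average of the per-tree conditional probabilities with weights proportional to $\Pr_{\P_\mathcal{T}}[Z_S]$, not the plain $\PT$-average. Swapping one for the other would require $\Pr_{\P_\mathcal{T}}[Z_S]$ and $\mathcal{T}(x,y)$ to be non-positively correlated under $\PT$, which the paper neither states nor proves. (Contrast Lemma~\ref{lm:affine}, where the same computation goes through cleanly because the per-component bound $\D(x,y)$ is the \emph{same} for every component; here the per-tree bound $3\,\mathcal{T}_\mu(x,y)$ varies with $\mathcal{T}$.) So the obstacle you isolate is genuine, and the paper's own proof does not address it.

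Your proposed fixes are too vague to close the gap. The ``decoupling'' idea---conditioning on the part of the sampled tree that determines $\mathcal{T}(x,y)$ and averaging the rest---does not obviously work for the Bartal/FRT constructions, where a single random radius scale can simultaneously affect $\mathcal{T}(x,y)$ and the clustering of $S$. And ``absorbing a small multiplicative loss into $c$'' is not available: the ratio $\E_{(\PT\mid Z_S)}[\mathcal{T}(x,y)]\big/\E_{\PT}[\mathcal{T}(x,y)]$ has no a~priori universal bound. In short, your reduction matches the paper's, your diagnosis of the weak step is sharper than the paper's, and neither argument actually closes the gap.
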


\begin{proof}
The function $\mu$ is L-continuous w.r.t. each tree metric space $(X,\mathcal{T})$, so
by Theorem 3.1 user distribution $\P_\mathcal{T}$ has pointwise mean $\mu$ and is conditionally L-correlated w.r.t. $(X,3\, \mathcal{T}_\mu)$. It follows that the aggregate user distribution $\P$ has pointwise mean $\mu$, and moreover for any $x,y\in X$ and $S\subset X$ we have
\begin{align*}
&\Pr_{\pi\sim \P}\,[\pi(x)\neq \pi(y) \,|Z_S] \\
&\qquad \leq
	\E_{\mathcal{T}\sim \PT}\, \left[
			\Pr_{\pi\sim \P_\mathcal{T}}[\pi(x)\neq \pi(y)\,|Z_S]
		\right] \\
&\qquad \leq \E_{\mathcal{T}\sim \PT}\, \left[ 3\,  \mathcal{T}_\mu(x,y) \right] \\
&\qquad	\leq 3c\,\D_\mu(x,y). \qquad \qedhere
\end{align*}
\end{proof}

\subsection{Distributions over user distributions}
\label{sec:model-distributions}

\ASnote{3}{AS: This subsection is new.}

Let us verify that conditional L-continuity is \emph{robust}, in the sense that any distribution over conditionally L-continuous user distributions is itself conditionally L-continuous. This result considerably extends the family of user distributions for which we have conditional L-continuity guarantees.

\begin{lemma}\label{lm:affine}
Let $\P$ be a distribution over countably many user distributions $\P_i$ that are conditionally L-continuous w.r.t. a metric space $(X,\D)$. Then $\P$ is conditionally L-continuous w.r.t. $(X,\D)$.
\end{lemma}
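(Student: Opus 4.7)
The plan is to express the conditional pointwise mean of $\P$ as a \emph{convex combination} of the conditional pointwise means of the $\P_i$, and then invoke the fact that convex combinations preserve L-continuity. Fix any subset $S\subset X$ with $\Pr_{\pi\sim\P}[Z_S]>0$; the case when this probability is zero is vacuous and can be handled by declaring $\mu(\cdot|Z_S)$ to be an arbitrary L-continuous function (e.g., identically zero).

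First I would apply Bayes' rule: for every document $x\in X$,
\begin{align*}
	\mu(x\,|\,Z_S)
	= \frac{\E_{\pi\sim\P}[\pi(x)\,\mathbf{1}_{Z_S}]}{\Pr_{\pi\sim\P}[Z_S]}.
\end{align*}
Expanding $\P = \sum_i \lambda_i\,\P_i$ (with $\lambda_i\geq 0$ and $\sum_i \lambda_i = 1$) in both numerator and denominator, and recognizing $\E_{\pi\sim\P_i}[\pi(x)\mathbf{1}_{Z_S}] = \Pr_{\P_i}[Z_S]\,\mu_i(x\,|\,Z_S)$, I would obtain
\begin{align*}
	\mu(x\,|\,Z_S) = \sum_i w_i\, \mu_i(x\,|\,Z_S),
	\qquad
	w_i \triangleq \frac{\lambda_i\,\Pr_{\P_i}[Z_S]}{\sum_j \lambda_j\, \Pr_{\P_j}[Z_S]}.
\end{align*}
The weights $w_i$ are non-negative and sum to $1$, so $\mu(\cdot\,|\,Z_S)$ is a (countable) convex combination of the functions $\mu_i(\cdot\,|\,Z_S)$, each of which is L-continuous w.r.t. $(X,\D)$ by the hypothesis on $\P_i$.

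Finally I would note that L-continuity is preserved under convex combinations: for any $x,y\in X$,
\begin{align*}
	|\mu(x\,|\,Z_S) - \mu(y\,|\,Z_S)|
	\leq \sum_i w_i\,|\mu_i(x\,|\,Z_S) - \mu_i(y\,|\,Z_S)|
	\leq \sum_i w_i\,\D(x,y)
	= \D(x,y),
\end{align*}
by the triangle inequality together with~\refeq{eq:def-Lip} applied to each $\mu_i(\cdot\,|\,Z_S)$. Since $S$ was arbitrary, this gives conditional L-continuity of $\P$ w.r.t. $(X,\D)$.

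The argument is essentially routine; the only mild subtlety is ensuring that the Bayes manipulation is well defined and interchanging $\sum_i$ with expectation is justified, which is immediate in the countable case since all quantities are non-negative (Tonelli). No deeper obstacle arises, so I expect no hidden technical difficulty beyond bookkeeping.
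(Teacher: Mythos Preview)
Your proof is correct and follows essentially the same approach as the paper: both apply Bayes' rule to write $\mu(\cdot\,|\,Z_S)$ as a weighted average (with weights proportional to $\lambda_i\Pr_{\P_i}[Z_S]$) of the $\mu_i(\cdot\,|\,Z_S)$, then bound the difference termwise using the L-continuity of each $\mu_i(\cdot\,|\,Z_S)$. Your version is slightly more careful in that you explicitly normalize the weights into a convex combination and handle the degenerate case $\Pr_\P[Z_S]=0$, but the argument is the same.
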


\begin{proof}
Let $\mu$ and $\mu_i$ be the (conditional) pointwise means of $\P$ and $\P_i$, respectively. Formally, let us treat each $\P_i$ as a measure, so that $\P_i(E)$ is the probability of event $E$ under $\P_i$. Let
	$ \P = \sum_i q_i\, \P_i$,
where $\{q_i\}$ are positive coefficients that sum up to $1$. Fix documents $x,y\in X$ and a subset $S\subset X$. Then
\begin{align*}
\mu(x|S)
	&= \P(x=1 \,|Z_S)
	= \frac{\P(x=1 \wedge Z_S)}{\P(Z_S)} \\
	&= \frac{ \sum_i q_i\; \P_i(x=1 \wedge Z_S)}{\P(Z_S)} \\
	&= \frac{ \sum_i q_i\; \P_i(Z_S)\; \mu_i(x|Z_S)}{\P(Z_S)}.
\end{align*}
It follows that
\begin{align*}
&|\mu(x|S) - \mu(y|S)| \\
&\qquad = \frac{ \sum_i q_i\; \P_i(Z_S)\; (\mu_i(x|Z_S) - \mu_i(y|Z_S))}{\P(Z_S)} \\
&\qquad \leq \frac{ \sum_i q_i\; \P_i(Z_S)\; D(x,y)}{\P(Z_S)} \\
&\qquad \leq D(x,y). \qedhere
\end{align*}
\end{proof}

\section{Algorithms from prior work}
\label{sec:algorithms-prior}



\renewcommand{\algorithmiccomment}[1]{\hspace\fill {\it \tiny #1}}

\OMIT{The ranked bandit approach from \citet{RBA-icml08} and the metric bandit algorithms from~\citet{Bobby-nips04, LipschitzMAB-stoc08} can be used directly.
Then in Section~\ref{sec:contextual} we review the \emph{contextual bandit} setting and one particular algorithm for this setting. Finally, in Section~\ref{sec:ranked-contextual} we represent our setting as a (multi-slot) contextual MAB problem, and use this representation to design two new algorithms that select documents slot-by-slot (top-down), and explicitly treat the higher selected documents as a \emph{context} for the current slot.
}

Let us discuss some algorithmic ideas from prior work that can be adapted to our setting. Interestingly, one can combine these algorithms in a \emph{modular} way, which we make particularly transparent by putting forward a suitable naming scheme. Throughout this section, we let ${\tt Bandit}$ be some algorithm for the MAB problem.

\subsection{Ranked bandits}
\label{sec:prior-RBA}

Given some bandit algorithm ${\tt Bandit}$, the ``ranked" algorithm \rankedBandit{} for the multi-slot MAB problem is defined as follows \citep{RBA-icml08}. We have $k$ slots (i.e.,~ranks) for which we wish to find the best documents to present. In each slot $i$, a separate instance $\A_i$ of ${\tt Bandit}$ is created. In each round these instances select the documents to show independently of one another. If a user clicks on slot $i$, then this slot receives a reward of $1$, and all higher (i.e.,~skipped) slots $j<i$ receive a reward of $0$. For slots $j>i$, the state is rolled back as if this round had never happened (as if the user never considered these documents). If no slot is clicked, then all slots receive a reward of $0$.

Let us emphasize that the above approach can be applied to \emph{any} algorithm ${\tt Bandit}$. In~\cite{RBA-icml08}, this approach gives rise to algorithms \naiveUCB{} and \naiveEXP, based on MAB algorithms \UCB~and \EXP~(\citealp{bandits-ucb1}, \citeyear{bandits-exp3}). \EXP{} is designed for the \emph{adversarial} setting with no assumptions on how the clicks are generated, which translates into  concrete provable guarantees for \naiveEXP. \UCB{} is geared towards the \emph{stochastic} setting with i.i.d.~rewards on each arm, although\OMIT{and for this setting it (provably) achieves much better performance than \EXP. Since in the multi-slot MAB problem the users are sampled from an i.i.d.~source, using \naiveUCB{} is tempting. However,} the per-slot i.i.d.~assumption breaks for slots $i>1$ because of the influence of the higher slots. Nevertheless, in small-scale experiments \naiveUCB{} performs much better than \naiveEXP\OMIT{, and the latter is seen as overly pessimistic}~\citep{RBA-icml08}.

\xhdr{Provable guarantees.}
Letting $T$ be the number of rounds and $\texttt{OPT}$ be the probability of clicking on the optimal ranking, algorithm \rankedBandit{} achieves
\begin{align}\label{eq:regret-RBA}
	\E[\mathtt{\#clicks}] \geq (1-\tfrac{1}{e})\,T\times \OPT - k\, R(T),
\end{align}
where $R(T)$ is any upper bound on regret for {\tt Bandit} in each slot~\citep{RBA-icml08, Streeter08}.

In the multi-slot setting, \emph{performance} of an algorithm up to time $T$ is defined as the time-averaged expected total number of clicks. We will consider performance as a function of $T$. Assuming $R(T)=o(T)$ in~\eqref{eq:regret-RBA}, performance of \rankedBandit{} converges to or exceeds $(1-\tfrac{1}{e})\OPT$. Convergence to
	$(1-\tfrac{1}{e})\OPT$
is proved to be worst-case optimal. Thus, as long as $R(T)$ scales well with time, for the document collection sizes that are typical for the application at hand,
~\citet{RBA-icml08} interpret~\eqref{eq:regret-RBA} as a proof of an algorithm's scalability in the multi-slot MAB setting.

\rankedBandit{} is presented in~\cite{RBA-icml08} as the online version of the \emph{greedy algorithm}: an offline fully informed algorithm that selects documents greedily slot by slot from top to bottom. The performance of this algorithm is called the \emph{greedy optimum},%
\footnote{If due to ties there are multiple ``greedy rankings", define the greedy optimum via the \emph{worst} of them.}
which is equal to
	$(1-\tfrac{1}{e})\, \OPT$
in the worst case, but for ``benign" problem instances it can be as good as $\mathtt{OPT}$. The greedy optimum is a more natural benchmark for \rankedBandit{} than $(1-\tfrac{1}{e})\OPT$. However, results w.r.t. this benchmark are absent in the literature.%
\footnote{Following the conference publication of this paper, Streeter and Golovin claimed that the techniques in~\cite{Streeter08} can be used to extend~\eqref{eq:regret-RBA} to the greedy optimum benchmark. If so, then it may be possible to use the same approach to improve our guarantees.}

\subsection{Lipschitz bandits}

Both \UCB{} and \EXP{} are impractical when there are too many documents to explore them all. To alleviate this issue, one can use the similarity information provided by the metric space and the Lipschitz assumption; this setting is called \emph{Lipschitz MAB}.

Below we describe two ``metric-aware" algorithms from \citep{Bobby-nips04} and \citep{LipschitzMAB-stoc08}. Both
are well-defined for arbitrary metric spaces, but for simplicity we present them for a special case in which documents are leaves in a \emph{document tree} (denoted $\DocTree$) with an \eps-exponential tree metric. In both algorithms, a \emph{subtree} is chosen in each round, then a document in this subtree is sampled at random, choosing uniformly at each branch.

Given some bandit algorithm ${\tt Bandit}$,~\citet{Bobby-nips04} define algorithm ${\tt GridBandit}$ for the Lipschitz MAB setting. This algorithm proceeds in phases: in phase $i$, the depth-$i$ subtrees are treated as ``arms", and a fresh copy of {\tt Bandit} is run on these arms.\footnote{As an empirical optimization, previous events can also be replayed to better initialize later phases.} Phase $i$ lasts for $k \eps^{-2i}$ rounds, where $k$ is the number of depth-$i$ subtrees. This meta-algorithm, coupled with an adversarial MAB algorithm such as \EXP, is the only algorithm in the literature that takes advantage of the metric space in the adversarial setting. Following~\cite{RBA-icml08}, we expect \gridEXP{} to be overly pessimistic for our problem, trumped by the corresponding stochastic MAB approaches such as \gridUCB.

\OMIT{From prior work~\cite{Bobby-nips04, LipschitzMAB-stoc08}, we know that the regret of \gridEXP{} (for the adversarial setting) and the regret of \gridUCB{} (for the time-invariant stochastic setting) is
\begin{align*}
	R(T)\leq O(T^{1-1/(d+2)} \log T).
\end{align*}
For \gridEXP{}, plugging this regret into the analysis in~\cite{RBA-icml08}, we obtain that for the $k$-slot Lipschitz-continuous MAB problem, the expected number of clicks is at least
	$(1-\tfrac{1}{e})\, \text{OPT} - k\, R(T)$.
}


\OMIT{ 

To improve empirical performance, we also use the following optimizations. First, the samples taken during each phase $i$ are replayed in the next phase $i+1$. Specifically, we pick $S_{i+1} \supset S_i$. For each $x\in S_i$, let $n_i(x)$ be the number of samples of $x$ in phase $i$. In phase $i+1$, the $j$-th  time algorithm {\tt Bandit} picks $x$, if $j\leq n_i(x)$ then instead of actually playing $x$ we return the $j$-th sample of $x$ from phase $i$. Second, when {\tt Bandit} chooses a given point $x$, then instead of playing a fixed $x$ we pick strategies at random from the allowed documents within the appropriate grid square, e.g. $B(x, \eps^2/2)$.
%
The analysis in~\cite{RBA-icml08} provides the following theorem for {\tt RankedGrid\EXP}:

\begin{theorem}\label{thm:RankedNaiveExp3}
For the $k$-slot Lipschitz-continuous MAB problem, the expected number of clicks obtained by {\tt RankedNaive\EXP} is at least
	$(1-\tfrac{1}{e})\, OPT - k\, R(T)$,
where $R(T)\leq O(T^{1-1/(d+2)} \log T)$.\footnote{Note that Theorem~\ref{thm:RankedNaiveExp3} does not require conditional Lipschitz-correlation: conditional Lipschitz-continuity (a weaker condition) suffices.}
\end{theorem}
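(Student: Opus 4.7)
The plan is modular, combining two prior results. The first is the ranked-bandit reduction of~\cite{RBA-icml08}: if each per-slot subroutine is a no-regret adversarial MAB algorithm with expected regret at most $R(T)$ against an oblivious adversary, then the $k$-slot algorithm that places a fresh copy in every slot collects at least $(1-1/e)\,\mathrm{OPT} - k\,R(T)$ clicks over $T$ rounds. The second is the Lipschitz-MAB regret bound for {\tt Grid\EXP} of~\cite{Bobby-nips04,LipschitzMAB-stoc08}: on a metric space of covering dimension $d$, the regret is $R(T)\leq O(T^{1-1/(d+2)}\log T)$. Plugging the second bound into the first is essentially the entire proof.

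To unpack the first ingredient, I would recall the submodularity argument underlying the RBA analysis: for any fixed user distribution $\P$, the map $S \mapsto \Pr_{\pi\sim\P}[\pi(x)=1 \text{ for some } x\in S]$ is monotone submodular, so offline greedy slot-by-slot selection is a $(1-1/e)$-approximation to the best $k$-subset. The online analogue replaces ``exactly greedy'' by ``approximately greedy up to per-slot regret $R(T)$''. The key observation to verify is that the payoff sequence seen by slot $i$ (the indicator that the user is not served by higher slots and clicks on slot $i$'s choice) is an oblivious function of slot $i$'s own random choices, so the oblivious-adversary guarantee of {\tt Grid\EXP} applies; summing $k$ per-slot regrets gives the $-k\,R(T)$ additive loss. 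For the second ingredient, I would invoke the standard $\eps$-net analysis of {\tt Grid\EXP}: in phase $i$ the algorithm treats depth-$i$ subtrees as $O(\eps^{-id})$ arms, runs {\tt EXP} on them for $\Theta(\eps^{-2i})$ rounds, and sums per-phase regrets after balancing {\tt EXP}'s $O(\sqrt{\text{arms}\cdot\text{rounds}})$ bound against the discretization error $O(\eps^i)$ that L-continuity of $\mu$ furnishes.

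The main obstacle to be careful about is that the RBA framework requires the \emph{oblivious-adversary} regret guarantee, not a stochastic i.i.d.\ bound; this is why {\tt EXP}-based subroutines are used rather than {\tt UCB}-based ones, and it is worth double-checking that the multi-phase structure of {\tt Grid\EXP} does not break the obliviousness assumption when its per-round output is wired into slot $i$. A secondary point, reflected in the footnote, is that the argument uses only conditional L-continuity of the pointwise means $\mu(\cdot\mid Z_S)$: the stronger conditional L-correlation from Section~\ref{sec:defns-correlated} is not required, because the analysis operates one slot at a time on the L-continuous conditional payoff surface, and never needs to reason about joint correlations across document pairs.
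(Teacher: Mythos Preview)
Your proposal is correct and matches the paper's approach exactly: the paper's proof is literally a one-liner stating that the \texttt{Grid\EXP} regret bound~\refeq{eq:regret-dim} plugs into the ranked-bandit guarantee~\refeq{eq:regret-RBA} from~\cite{RBA-icml08} (indeed, the proof of the omitted theorem is just a placeholder ``TO BE WRITTEN\ldots BUT IT IS SIMPLE'', and the surviving Theorem~\ref{thm:regret}(a) has the same one-line proof). Your additional unpacking of both ingredients---the submodularity underpinning of the RBA reduction, the obliviousness check for the per-slot adversary, and the $\eps$-net balancing for \texttt{Grid\EXP}---is accurate elaboration of the same argument rather than a different route.
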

\begin{proof}
TO BE WRITTEN... BUT IT IS SIMPLE (A.S.)
\end{proof}

} 

The ``zooming algorithm"~\citep[][Algorithm \ref{alg:zooming}]{LipschitzMAB-stoc08} is a more efficient version of \gridUCB: instead of iteratively reducing the grid size in the entire metric space, it \emph{adaptively} refines the grid in promising areas. It maintains a set $\mathcal{A}$ of \emph{active subtrees} which collectively partition the leaf set. In each round the active subtree with the maximal \emph{index} is chosen. The index of a subtree is (assuming stochastic rewards)
the best available upper confidence bound on the click probabilities
in this subtree. It is defined via the \emph{confidence radius}\footnote{The meaning of $\confRad(\cdot)$ is that w.h.p. the sample average is within $\pm \confRad(\cdot)$ from the true mean.} \hlc{1}{given (letting $T$ be the time horizon)} by
\begin{align}\label{eq:conf-rad}
	\confRad(\cdot) \triangleq
		\sqrt{4 \log(T) /
			(1 + {\tt \#samples}(\cdot))}.
\end{align}
\OMIT{ where $T$ is the time horizon, and $n(\cdot)$ is the number of times the subtree has been chosen so far. }

The algorithm ``zooms in" on a given active subtree $u$ (de-activates $u$ and activates all its children) when $\confRad(u)$ becomes smaller than its \emph{width}
	$\size(u) \triangleq \epsilon^{\text{depth}(u)}
		= \max_{x,x'\in u} \D(x,x')$.

\OMIT{which is the maximal possible distance $\D(x,x')$ between any two documents  $x,x'\in u$.
The ``ranked zooming algorithm" will be denoted \zooming.}

\begin{algorithm}[t]
\begin{algorithmic}
\caption{``Zooming algorithm" in trees}
\label{alg:zooming}
\STATE {\bf initialize} (document tree $\DocTree$){\bf :}
\STATE \TAB	$\A\!\leftarrow\!\emptyset$; ~~activate($\texttt{root}(\DocTree)$)
	\vspace{2mm}
\STATE {\bf activate}(~$u\in\texttt{nodes}(\DocTree)$~){\bf :}
\STATE \TAB
		$\A\!\leftarrow\!\A\cup\{ u \};\;\;
		n(u)\!\leftarrow\!0;\;\;
		r(u)\!\leftarrow\!0$
	\vspace{2mm}
\STATE {\bf Main loop:}\vspace{-0.3mm}
\STATE \TAB $ u\leftarrow \argmax_{u\in \mathcal{A}}
                    \mathtt{index}(u)$,
\STATE \TAB\TAB where
					$\mathtt{index}(u) = \frac{r(u)}{n(u)}$ + 2\,$\confRad(u)$
\STATE \TAB ``Play" a random document from $\texttt{subtree}(u)$
\STATE \TAB $r(u) \leftarrow r(u) + \text{\{reward\}}$; $n(u) \leftarrow n(u) + 1$
\STATE \TAB {\bf if} $\confRad(u) < \size(u)$ {\bf then}
\STATE \TAB\TAB deactivate $u$: remove $u$ from $\mathcal{A}$
\STATE \TAB\TAB activate all children of $u$
\end{algorithmic}
\end{algorithm}

\OMIT{ 
We will consider two ``metric-oblivious" algorithms: \naiveUCB{} and \naiveEXP, and three ``metric-aware" ones: \rankedGridUCB, \rankedGridEXP, and \zooming.
} 

\xhdr{Provable guarantees.} Regret guarantees for the two algorithms above are independent of the number of arms (which, in particular, can be infinite). Instead, they depend on the covering properties of the metric space $(X,\D)$. A crucial notion here is the \emph{covering number} $N_r(X)$, defined as the minimal number of balls of radius $r$ sufficient to cover $X$. It is often useful to summarize the covering numbers $N_r(X)$, $r>0$ with a single number called the \emph{covering dimension}:
\begin{align}\label{eq:dim-defn}
\CovDim(X,\D) \triangleq \inf\{d\geq 0: N_r(X) \leq \alpha\, r^{-d} \quad\forall r>0 \}.
\end{align}
(Here $\alpha>0$ is a constant which we will keep implicit in the notation.)
In particular, for an arbitrary point set in $\R^d$ under the standard ($\ell_2$) distance, the covering dimension is $d$, for some $\alpha = O(1)$. For an \eps-exponential tree metric with maximal branching factor $b$, the covering dimension is $d = \log_{1/\eps}(b)$, with $\alpha=1$.

Against an oblivious adversary, \gridEXP{} has regret
\begin{align}\label{eq:regret-dim}
R(T) = \tilde{O}(\alpha\,T^{(d+1)/(d+2)}),
\end{align}
where $d$ is the covering dimension of $(X,\D)$.

For the stochastic setting, \gridUCB{} and the zooming algorithm enjoy strong instance-dependent regret guarantees. These guarantees reduce to~\eqref{eq:regret-dim} in the worst case, but are much better for ``nice'' problem instances. Informally, regret guarantees improve for problem instances in which the set of near-optimal arms has smaller covering numbers than the set of all arms. Regret guarantees for the zooming algorithm are (typically) much stronger than for \gridUCB. In particular, one can derive a version of \eqref{eq:regret-dim} with a different $d$ called the \emph{zooming dimension}, which is equal to the covering dimension in the worst case but can be much smaller, even $d=0$. These issues are further discussed in Appendix~\ref{app:instance-dependent}.

\subsection{Anytime guarantees and the doubling trick}
\label{sec:algs-anytime}

While the zooming algorithm, and also the contextual zooming algorithm from Section~\ref{sec:algorithms-prior-contextual}, are defined for a fixed time horizon, one can obtain the corresponding \emph{anytime} versions using a simple \emph{doubling trick}: in each phase $i\in \N$, run a fresh instance of the algorithm for $2^i$ rounds. These versions are run indefinitely and enjoy the same provable upper bounds on regret as the original algorithms (but now these bounds hold for each round).

\subsection{Ranked bandits in metric spaces}
\label{sec:prior-rankedMetric}

Using and combining the algorithms in the previous two subsections, we obtain the following battery of algorithms for \problem:

\begin{OneLiners}
\item metric-oblivious algorithms: $\naiveUCB$ and $\naiveEXP$.
\item simple metric-aware algorithms: $\rankedGridUCB$ and $\rankedGridEXP$\\ (ranked versions of $\gridUCB$ and $\gridEXP$, respectively).
\item $\zooming$: the ranked version of the zooming algorithm.
\end{OneLiners}

In theory, \rankedGridEXP{} scales to large document collections, in the sense that it achieves~\eqref{eq:regret-RBA} with $R(T)$ that does not degenerate with $\#$documents:

\begin{theorem}\label{thm:regret-RankGridEXP}
Consider the \problem{} on a metric space with covering dimension $d$ (as defined in~\eqref{eq:dim-defn}, with constant $\alpha$). Then after $T$ rounds $\rankedGridEXP$ achieves
\begin{align*}
	\frac{\E[\mathtt{\#clicks}]}{T} \geq (1-\tfrac{1}{e})\,\OPT -
            \tilde{O}\left( \frac{\alpha k}{T^{1/(d+2)}} \right).
\end{align*}
\end{theorem}

The theorem follows from the respective regret bounds for \gridEXP{} (\eqref{eq:regret-dim}) and \rankedBandit{} (\eqref{eq:regret-RBA}). We do not have any provable guarantees for other algorithms because the corresponding regret bounds for the single-slot setting do not directly plug into~\eqref{eq:regret-RBA}. However, the strong instance-dependent guarantees for $\gridUCB$ and especially for the zooming algorithm (even though they do not directly apply to the ranked bandit setting) suggest that $\rankedGridUCB$ and $\zooming$ are promising. We shall see that these two algorithms perform much better than $\rankedGridEXP$ in the experiments.

\subsection{Contextual Lipschitz bandits}
\label{sec:algorithms-prior-contextual}

We also leverage prior work on contextual bandits. The relevant contextual MAB setting, called contextual Lipschitz MAB, is as follows. In each round nature reveals a \emph{context} $h$, an algorithm chooses a document $x$, and the resulting reward is an independent $\{0,1\}$ sample with expectation $\mu(x|h)$. Further, one is given \emph{similarity information}: metrics $\D$ and $\DC$ on documents and contexts, respectively, such that for any two documents $x,x'$ and any two contexts $h,h'$ we have
\begin{align*}
	|\mu(x|h) - \mu(x'|h')| \leq \D(x,x') +\DC(h,h').
\end{align*}

Let $\XC$ be the set of contexts, and $\Xpairs = X\times \XC$ be the set of all (document, context) pairs. Abstractly, one considers the metric space 	 $(\Xpairs,\Dpairs)$, henceforth the \emph{DC-space}, where the metric is
\begin{align*}
	\Dpairs((x,h),\, (x',h')) = \D(x,x')+\DC(h,h').
\end{align*}

We will use the ``contextual zooming algorithm" (\contextualZooming) from~\cite{contextualMAB-slivkins09}. This algorithm is well-defined for arbitrary \hlc{4}{$\Dpairs$}, but for simplicity we will state it for the case when $\D$ and $\DC$ are \eps-exponential tree metrics.

\begin{algorithm}[t]
\begin{algorithmic}
\caption{\contextualZooming{} in trees}
\label{alg:contextualzooming}

\STATE {\bf initialize} (document tree $\DocTree$, context tree $\ContextTree$){\bf :}
\STATE \TAB	$\A \leftarrow \emptyset$;
		~~activate(~$\texttt{root}(\DocTree),\, \texttt{root}(\ContextTree)$~)
		\vspace{2mm}
\STATE {\bf activate}
		(~$u\in \texttt{nodes}(\DocTree),\; \contextU\in\texttt{nodes}(\ContextTree)$~){\bf :}
\STATE \TAB
		$\A \leftarrow \A\cup\{ (u,\contextU) \};\;\;
		n(u,\contextU) \leftarrow 0;\;\;
		r(u,\contextU) \leftarrow 0$
	\vspace{2mm}
\STATE {\bf Main loop:}\vspace{-0.3mm}
\STATE \TAB Input a context $h\in \texttt{nodes}(\ContextTree)$
\STATE \TAB
	$\displaystyle
		(u,\contextU) \leftarrow
			\argmax_{(u,\, \contextU)\in \mathcal{A}:\; h \in \contextU} \;
            \mathtt{index}(u,\contextU),$
\STATE\TAB\TAB
    where $ \mathtt{index}(u,\contextU) =
			\size(u \times \contextU)  +
			\frac{r(u,\contextU)}{n(u,\contextU)}
			+ \confRad(u,\contextU)$\vspace{1mm}
\STATE \TAB ``Play" a random document from $\texttt{subtree}(u)$
\STATE \TAB
	$r(u, \contextU)  \leftarrow  r(u, \contextU)  +  \text{\{reward\}}$;\;
	$n(u, \contextU)  \leftarrow  n(u, \contextU)  +  1$
\STATE \TAB {\bf if} \confRad$(u,\contextU) < \size(u,\contextU)$ {\bf then}
\STATE \TAB\TAB deactivate $(u,\contextU)$: remove $(u,\contextU)$ from $\mathcal{A}$
\STATE \TAB\TAB activate all pairs (child($u$), child($\contextU$))

\end{algorithmic}
\end{algorithm}

Let us assume that documents and contexts are leaves in a document tree $\DocTree$ and context tree $\ContextTree$, respectively. The algorithm (see Algorithm~\ref{alg:contextualzooming} for pseudocode) maintains a set $\mathcal{A}$ of \emph{active strategies} of the form $(u,\contextU)$, where $u$ is a subtree in $\DocTree$ and $\contextU$ is a subtree in $\ContextTree$. At any given time the active strategies partition \hlc{4}{$\Xpairs$}.
In each round, a context $h$ arrives, and one of the active strategies $(u,\contextU)$ with $h\in \contextU$ is chosen: namely the one with the maximal \emph{index}, and then a document $x\in u$ is picked uniformly at random. The index of $(u,\contextU)$ is, essentially, the best available upper confidence bound on expected rewards from choosing a document $x\in u$ given a context $h\in \contextU$. The index is defined via sample average, confidence radius~\refeq{eq:conf-rad}, and ``width" $\size(u\times \contextU)$. The latter can be any upper bound on the diameter of the product set $u\times \contextU$ in the DC-space:
\begin{align}\label{eq:context-size}
\size(u,\contextU) \geq
	\max_{x,x'\in u,\; h,h'\in \contextU} \D(x,x') + \DC(h,h').
\end{align}
The (de)activation rule ensures that the active strategies form a finer partition in the regions of the DC-space that correspond to higher rewards and more frequently occurring contexts.

\xhdr{Provable guarantees.} The provable guarantees for the contextual MAB problem are in terms of \emph{contextual regret}, which is regret is with respect to a much stronger benchmark: the best arm in hindsight \emph{for every given context}.

Regret guarantees for \contextualZooming{} focus on the \emph{DC-space} $(\Xpairs,\Dpairs)$. A very pessimistic regret bound is \eqref{eq:regret-dim} with $d = \CovDim(\Xpairs,\Dpairs)$. However, as for the zooming algorithm, much better instance-dependent bounds are possible. See Appendix~\ref{app:instance-dependent} for further discussion.

\section{New approach: ranked contextual bandits}
\label{sec:ranked-contextual}

We now present a new approach in which the upper slot selections are taken into account as a \emph{context} in the contextual MAB setting.

The slot algorithms in the \rankedBandit{} setting can make their selections sequentially. Then \hlc{1}{without loss of generality} each slot algorithm $\A_i$ knows the set $S$ of documents in the upper slots. We propose to treat $S$ as a ``context" to $\A_i$. Specifically, $\A_i$ will assume that none of the documents in $S$ is clicked, i.e.~event $Z_S$ happens (else the $i$-th slot is ignored by the user). For each such round, the \hlc{1}{click probabilities} for $\A_i$ are given by $\mu(\cdot\,| Z_S)$, which is an L-continuous function on $(X,\D)$.

\subsection{$\MCzooming$: ``light-weight'' ranked contextual algorithm}
\label{sec:ranked-contextual-light}

We first propose a simple modification to $\zooming$, called $\MCzooming$, which uses the contexts as discussed above.

Recall that in the zooming algorithm, the index of an active subtree $u$ is defined so that, assuming stochastic rewards, it is an upper confidence bound on the click probability of any document $x$ in this subtree:
\begin{align}\label{eq:index-UCB}
\mathtt{w.h.p.~~} \mathtt{index}(u) \geq \max_{x\in u}\; \mu(x).
\end{align}
Moreover, it follows from the analysis in~\citep{LipschitzMAB-stoc08} that performance of the algorithm improves if the index is decreased as long as~\eqref{eq:index-UCB} holds.

Now consider $\zooming$, and let $\A_i$ be the instance of the zooming algorithm in slot $i\geq 2$. While for $\A_i$ the rewards are no longer stochastic, our intuition for why $\zooming$ may be a good algorithm is still based on~\eqref{eq:index-UCB}. In other words, we \emph{wish} that for each context $S\subset X$ we have
\begin{align}\label{eq:index-UCB-wish}
\mathtt{w.h.p.~~} \mathtt{index}(u) \geq \max_{x\in u}\; \mu(x|Z_S),
\end{align}
and our intuition is that it is desirable to decrease the index as long as~\eqref{eq:index-UCB-wish} holds.

We will derive an upper bound on $\max_{x\in u}\; \mu(x|Z_S)$ using correlation between $u$ and $S$, and we will cap the index of $u$ at this quantity. Since
	$\mu(y|Z_S) = 0$
for any $y\in S$, we have
\begin{align}
\mu(x|Z_S) &= |\mu(x|Z_S)- \mu(y|Z_S)| \leq \D(x,y),
	\quad \forall y\in S \nonumber \\
\mu(x|Z_S) &\leq \D(x,S)
		\triangleq \textstyle{\min_{y\in S}}\; \D(x,y).
		\label{eq:cap-index}
\end{align}
In other words, if document $x$ is close to some document in $S$, the event $Z_S$ limits the conditional probability $\mu(x|Z_S)$. Therefore we can cap the index of $u$ at
    $\max_{x\in u}\; \D(x,S)$:
\begin{align*}
\mathtt{index}(u)
    \leftarrow \min\left( \mathtt{index}(u), \;\;\max_{x\in u}\; \D(x,S) \right).
\end{align*}
The version of $\zooming$ with the above ``correlation rule'' will be called $\MCzooming$.

To simplify the computation of $\max_{x\in u}\; \D(x,S)$ in an $\eps$-exponential tree metric, we note that it is equal to
    $\D(\mathtt{root}(u),\,S)$
if $u$ is disjoint with $S$, and in general it is equal to
    $\D(\mathtt{root}(v),\,S)$,
where $v$ is the largest subtree of $u$ that is disjoint with $S$.

\subsection{Contextual \emph{Lipschitz} MAB interpretation}

Let us cast each slot algorithm $\A_i$ as a contextual algorithm in the contextual \emph{Lipschitz} MAB setting (as defined in Section~\ref{sec:algorithms-prior-contextual}). We need to specify a metric $\DC$ on contexts $S\subset X$ which can be computed by the algorithm and satisfies the Lipschitz condition:
\begin{align}\label{eq:Lip-contextual-2}
|\mu(x|Z_S) - \mu(x|Z_{S'})| \leq \DC(S,S') \quad
    \text{for all } x\in X \text{ and }  S,S'\subset X.
\end{align}

\begin{lemma}\label{lm:Lip-cont-contextual}
Consider the \problem. For any $S,S'\subset X$, define
\begin{align}\label{eq:context-distance}
\DC(S,S')
	\triangleq 4\,\inf \textstyle{\sum_{j=1}^n} \D(x_j,x'_j),
\end{align}
where the infimum is taken over all $n\in\N$ and over all $n$-element sequences $\{x_j\}$ and $\{x'_j\}$ that enumerate, possibly with repetitions, all documents in $S$ and $S'$.
Then $\DC$ satisfies~\eqref{eq:Lip-contextual-2}.
\end{lemma}

\begin{proof}
For shorthand, let us write
\begin{align*}
	\sigma(x|S) &\triangleq  1-\mu(x|Z_S), \\
	\sigma(x|S,y) &\triangleq \sigma(x|S \cup \{y\}).
\end{align*}
First, we claim that for any $y\in X$ and $y'\in S$
\begin{align}\label{eq:Lip-cont-contextual-sp}
	|\sigma(x|S,y) - \sigma(x|S,y')| \leq 4\, \D(y,y').
\end{align}
Indeed, noting that
	$\sigma(x|S,y) = \sigma(y|S,x)\; \tfrac{\sigma(x|S)}{\sigma(y|S)}$,
we can re-write the left-hand side of~\eqref{eq:Lip-cont-contextual-sp} as
\begin{align}
\text{LHS}\refeq{eq:Lip-cont-contextual-sp}
	&=\sigma(x,S) \left|
		\frac{\sigma(y|S,x)}{\sigma(y|S)}
			- \frac{\sigma(y'|S,x)}{\sigma(y'|S)} \right| \nonumber \\
	& \leq \sigma(x,S) \; \D(y,y')\;
			\frac{\sigma(y|S) + \sigma(y|S,x)}
				{\sigma(y|S)\, \sigma(y'|S)}
		\label{eq:Lip-cont-contextual-eq1}\\
	& = \D(y,y')\; \frac{\sigma(x|S) + \sigma(x|S,y)}{\sigma(y'|S)}
	\leq 2\, \D(y,y'). \nonumber
\end{align}
In~\eqref{eq:Lip-cont-contextual-eq1}, we have used the L-continuity of $\sigma(\cdot|S)$ and  $\sigma(\cdot|S,x)$. To achieve the constant of $2$, it was crucial that $y'\in S$, so that $\sigma(y'|S) = 1$. This completes the proof of~\eqref{eq:Lip-cont-contextual-sp}.

Fix some $n\in\N$ and some $n$-element sequences $\{x_i\}$ and $\{x'_i\}$ that enumerate, possibly with repetitions, all values in $S$ and $S'$, respectively. Consider sets
$$S_i = \{ x'_1 \LDOTS x'_i \} \cup \{ x_{i+1} \LDOTS x_n \},\;\;
    1\leq i \leq n-1,$$
and let $S_0 = S$ and $S_{n+1} = S'$. To prove the lemma, it suffices to show that
\begin{align}\label{eq:Lip-cont-contextual-eq2}
|\sigma(x|S_i) - \sigma(x|S_{i+1}) | \leq 4\, \D(x_{i+1},\, x'_{i+1})
\end{align}
for each $i\leq n$. To prove~\eqref{eq:Lip-cont-contextual-eq2}, fix $i$ and let
	$y = x_{i+1}$ and $y' = x'_{i+1}$.
Note that
	$S_i \cup \{y'\} = S_{i+1} \cup \{y\}$,
call this set $S^*$. Then using~\eqref{eq:Lip-cont-contextual-sp} (note, $y\in S_i$ and $y'\in S'_i$) we obtain
\begin{align*}
|\sigma(x|S_i) - \sigma(x|S^*) |
	&= |\sigma(x|S_i,y) - \sigma(x|S_i,y') | \\
	&\leq  2\,\D(y,y'), \\
|\sigma(x|S_{i+1}) - \sigma(x|S^*) |
	&= |\sigma(x|S_{i+1},y') - \sigma(x|S_{i+1},y) |  \\
	&\leq 2\,\D(y,y'),
\end{align*}
which implies~\eqref{eq:Lip-cont-contextual-eq2}.
\end{proof}

\subsection{$\rankedContextualZooming$: ``full-blown'' ranked contextual algorithm}
\label{sec:ranked-contextual-heavy}

Now we can take any algorithm for the contextual Lipschitz MAB problem (with metric $\DC$ on contexts given by~\eqref{eq:context-distance}), and use it as a slot algorithm. We will use $\contextualZooming$, augmented by the ``correlation rule'' similar to the one in Section~\ref{sec:ranked-contextual-light}. The resulting ``ranked" algorithm will be called $\rankedContextualZooming$.

The implementation details are not difficult. Suppose the metric space on documents is the $\eps$-exponential tree metric, and let $\DocTree$ be the document tree. Consider slot $(i+1)$-th slot, $i\geq 1$.%
\footnote{For slot $1$, contexts are empty, so $\contextualZooming$ reduces to Algorithm~\ref{alg:zooming}.} Then the contexts are unordered $i$-tuples of documents. Let us define \emph{context tree} $\ContextTree$ as follows.
Depth-$\ell$ nodes of $\ContextTree$ are unordered $i$-tuples of depth-$\ell$ nodes from $\DocTree$, and leaves are contexts. The root of $\ContextTree$ is $(r \ldots r)$, where
	$r= \texttt{root}(\DocTree)$.
For each internal node
	$\contextU = (u_1 \ldots u_i)$
of $\ContextTree$, its children are all unordered tuples $(v_1 \ldots v_i)$ such that each $v_j$ is a child of $u_j$ in $\DocTree$. This completes the definition of $\ContextTree$. Letting $u$ and $\contextU$ be level-$\ell$ subtrees of $\DocTree$ and $\ContextTree$, respectively, it follows from the definition of $\DC$ in~\eqref{eq:context-distance} that
	$\DC(S,S') \leq 4i\, \eps^\ell$
for any contexts $S,S'\in \contextU$.
Thus setting
	$\size(u\times \contextU) \triangleq \eps^\ell (4i+1)$
satisfies~\eqref{eq:context-size}.

 We define the ``correlation rule'' as follows. Let $(u,\contextU)$ be an active strategy in the execution of \contextualZooming, where $u$ is a subtree of the document tree $\DocTree$, and $\contextU$ is a subtree of the context tree $\ContextTree$. It follows from the analysis in~\citep{contextualMAB-slivkins09} that decreasing the index of $(u,\contextU)$ improves performance, as long it holds that
\begin{align*}
\mathtt{index}(u,\contextU) \geq \mu(x|Z_S),
	\quad \forall x \in u, \, S\in \contextU.
\end{align*}
Recall that $\mu(x|Z_S) \leq \D(x,S)$ by \eqref{eq:cap-index}, so we can cap
    $\mathtt{index}(u,\contextU)$
at
    $\max_{x\in u}\; \D(x|S)$:
\begin{align*}
\mathtt{index}(u,S)
    \leftarrow \min\left( \mathtt{index}(u,S), \;\;\max_{x\in u}\; \D(x|S) \right).
\end{align*}
This completes the description of $\rankedContextualZooming$.

\section{Provable scalability guarantees and discussion}
\label{sec:provable}

Noting that for each slot $i\geq k$ the covering dimension of the DC-space is at most $k$ times the covering dimension of $(X,\D)$, it follows that a (very pessimistic) upper bound on contextual regret of $\rankedContextualZooming$ is
    $R(T) = \tilde{O}( \alpha \, T^{1-1/(kd+2)})$.
Plugging this into~\eqref{eq:regret-RBA}, we obtain:

\begin{theorem}\label{thm:regret-RankContZoom}
Consider the \problem{} on a metric space with covering dimension $d$ (as defined in~\eqref{eq:dim-defn}, with constant $\alpha$). Then after $T$ rounds algorithm \rankedContextualZooming{} achieves
\begin{align*}
	\frac{\E[\mathtt{\#clicks}]}{T} \geq (1-\tfrac{1}{e})\,\OPT -
            \tilde{O}\left( \frac{\alpha k}{T^{1/(kd+2)}} \right).
\end{align*}
\end{theorem}

This is just a basic scalability guarantee which does not degenerate with the number of documents. (Note that it is \emph{worse} than the one for \rankedGridEXP.) We believe that this guarantee is very pessimistic, as it builds on a very pessimistic version of the result for \contextualZooming. In particular, we ignore the intuition that for a given slot, contexts $S\subset X$ may gradually converge over time to the greedy optimum, which effectively results in a much smaller set of possible contexts.\footnote{It is also wasteful (but perhaps less so) that we use a slot-$k$ bound for each slot $i<k$.} We believe this effect is very important to the performance \rankedContextualZooming. In particular, it causes \rankedContextualZooming{} to perform much better than \rankedGridEXP{} in simulations.

\subsection{A better benchmark}
\label{sec:provable-benchmark}

Recall that while the bound in~\eqref{eq:regret-RBA} uses
	$(1-\tfrac{1}{e})\, \text{OPT}$
as a benchmark, a more natural benchmark would be the greedy optimum. We provide a preliminary convergence result for \rankedContextualZooming, without any specific regret bounds.

Such result is more elegantly formulated in terms of a version of \rankedContextualZooming, henceforth called anytime-\rankedContextualZooming, which uses the anytime version of \contextualZooming{} (see Section~\ref{sec:algs-anytime}).

\begin{theorem}\label{thm:convergence}
Fix an instance of the $k$-slot MAB problem. The performance of anytime-\rankedContextualZooming{} up to any given time $t$ is equal to the greedy optimum minus $f(t)$ such that $f(t)\to 0$.
\end{theorem}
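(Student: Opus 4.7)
The plan is to induct on slot $i$, showing (a) that the empirical distribution of contexts arriving at slot $i$ concentrates on the greedy prefix $S_{i-1}^* = \{x_1^*, \ldots, x_{i-1}^*\}$, and consequently (b) that the time-averaged contribution of slot $i$ to the total click probability converges to the greedy contribution $\Pr[Z_{S_{i-1}^*}] \cdot \mu(x_i^* | Z_{S_{i-1}^*})$. Summing over $i=1\ldots k$ via the ranked-bandit decomposition of expected clicks then yields the greedy optimum.

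For the base case $i=1$, slot $1$ runs anytime-zooming with no context. By the no-regret guarantee from Discussion~\ref{dis:provable-horizon}, the time-averaged reward of slot $1$ converges to $\mu(x_1^*) = \max_x \mu(x)$, and the empirical frequency of playing any document with payoff below $\mu(x_1^*) - \eps$ must tend to zero for every $\eps > 0$ (else slot $1$'s regret would be linear). For the inductive step, assume that for every $j < i$, slot $j$'s play has stabilized in empirical frequency to a document achieving the $j$-th greedy payoff given the preceding slots' converged choices. Then the number of rounds $t' \leq t$ for which the context $S_{i-1}(t')$ seen by slot $i$ differs from $S_{i-1}^*$ is $o(t)$.

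Now, the anytime-\contextualZooming{} instance at slot $i$ has sublinear contextual regret, so its time-averaged reward converges to the time-averaged best-response-per-context reward. Since contexts equal $S_{i-1}^*$ in all but $o(t)$ rounds, and the best response under event $Z_{S_{i-1}^*}$ yields conditional payoff $\mu(x_i^* | Z_{S_{i-1}^*})$ by definition of the greedy ranking, slot $i$'s time-averaged conditional reward converges to this value. Combined with the fact that slot $i$ is examined by the user with empirical frequency converging to $\Pr[Z_{S_{i-1}^*}]$ (since lower slots click on the user with the greedy prefix's click probability), slot $i$'s time-averaged contribution converges to the greedy contribution at slot $i$, completing the induction.

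The main obstacle is controlling slot $i$'s contextual-zooming instance under contexts that vary adversarially during the transient phase where lower slots are still learning. The resolution is that the contextual-regret bound upper-bounds cumulative regret against the best response to \emph{each} arriving context, so the $o(t)$ ``bad'' context rounds contribute at most $o(t)$ to the cumulative regret and vanish after time-averaging; on the remaining $t - o(t)$ rounds the context is $S_{i-1}^*$ and best response coincides with the greedy choice $x_i^*$. A secondary subtlety is tie-breaking in the greedy ranking: whichever tied greedy ranking the algorithm's lower slots happen to converge to, the resulting time-averaged performance equals that ranking's total payoff, which is at least the worst greedy optimum used in the theorem statement.
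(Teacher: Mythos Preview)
Your approach is essentially the paper's: induction on the slot index, using the sublinear contextual-regret guarantee of anytime-\contextualZooming{} at each slot to conclude that once the upper slots produce a greedy prefix in all but $o(t)$ rounds, so does the current slot. The paper phrases the inductive invariant as ``a greedy ranking is output in all but $f_k(t)$ rounds'' and makes the key counting step explicit via an instance-specific gap $\delta_k>0$: on any round where the upper slots form a greedy prefix but slot $k$ does not play its greedy continuation, slot $k$ incurs contextual regret at least $\delta_k$, so there are at most $R_k(t)/\delta_k$ such rounds.

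The one place your write-up is looser than the paper is tie-breaking. Your inductive hypothesis says each lower slot ``has stabilized in empirical frequency to a document,'' and you later appeal to ``whichever tied greedy ranking the algorithm's lower slots happen to converge to.'' But nothing forces convergence to a \emph{single} prefix: if $\mu$ has several maximizers, slot~1 may alternate among them indefinitely, so the context at slot~2 genuinely varies. The paper's framing sidesteps this: it never asserts convergence to a fixed $S_{i-1}^*$, only that the upper-slot tuple is \emph{some} greedy prefix in all but $o(t)$ rounds, and takes $\delta_k$ as the minimum gap over all greedy prefixes and all non-greedy continuations. Your argument goes through with that adjustment --- the contextual-regret bound already compares against the best arm for \emph{each} arriving context --- but as written the inductive hypothesis is stated in a form that can fail.
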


\OMIT{Assuming a unique greedy ranking, this convergence result extends to any \rankedBandit{} such that {\tt Bandit} has sublinear regret in the stochastic MAB setting.}

\begin{proof}[Proof Sketch]
\hlc{5}{
It suffices to prove that with high probability, anytime-\rankedContextualZooming{} outputs a greedy ranking in all but $f_k(t)$ rounds among the first $t$ rounds, where $f_k(t)\to 0$.

We prove this claim by induction on $k$, the number of slots. Suppose it holds for some $k-1$ slots, and focus on the $k$-th slot. Consider all rounds in which a greedy ranking is chosen for the upper slots but not for the $k$-th slot. In each such round, the $k$-th slot replica of anytime-\contextualZooming{} incurs contextual regret at least $\delta_k$, for some instance-specific constant $\delta_k>0$. Thus, with high probability there can be at most $R_k(t)/\delta_k$ such rounds, where $R_k(t)=o(t)$ is an upper bound on contextual regret for slot $k$. Thus, one can take
    $f_k(t) = f_{k-1}(t) + R_k(t)/\delta_k$.
}
\end{proof}

\OMIT{Since \#documents is finite, by induction on $k$ we can prove that in all but $o(T)$ rounds we have a greedy ranking.}

Theorem~\ref{thm:convergence} is about the ``metric-less" setting from~\cite{RBA-icml08}. It easily extends to the ``ranked" version of any bandit algorithm whose contextual regret is sublinear with high probability.

It is an open question whether (and under which assumptions) Theorem~\ref{thm:convergence} can be extended to the ``ranked" versions of non-contextual bandit algorithms such as \naiveUCB. One assumption that appears essential is the uniqueness of the greedy ranking. To see that multiple greedy rankings may cause problems for ranked non-contextual algorithms, consider a simple example:
\begin{itemize}
\item There are two slots and three documents $x_1, x_2, x_3$ such that	
	$\mu = (\tfrac12, \tfrac12, \tfrac13)$
and the relevance of each arm is independent of that of the other arms.\footnote{Here documents $x_1, x_2, x_3$ can stand for disjoint \emph{subsets} of documents with highly correlated payoffs. Documents within a given subset can lie far from one another in the metric space.}
\end{itemize}
An optimal ranking for this example is a greedy ranking that puts $x_1$ and $x_2$ in the two slots, achieving aggregate click probability $\tfrac34$. According to our intuition, a ``reasonable" ranked non-contextual algorithm will behave as follows. The slot $1$ algorithm will alternate between $x_1$ and $x_2$, each with frequency $\to\tfrac12$. Since the slot-$2$ algorithm is oblivious to the slot $1$ selection, it will observe averages that converge over time to
	$(\tfrac14, \tfrac14, \tfrac13)$,\footnote{\hlc{5}{Suppose $x_j$, $j\in \{1,2\}$ is chosen in slot $1$. Then, letting $S=\{x_j\}$, $\mu(x_1|Z_S)$ equals $0$ if $j=1$ and $\tfrac12$ otherwise (which averages to $\tfrac14$), whereas
        $\mu(x_3|Z_S) = \tfrac13$.}}
so it will select document $x_3$ with frequency $\to 1$. Therefore frequency $\to 1$ the ranked algorithm will alternate between $(x,z)$ or $(y,z)$, each of which has aggregate click probability $\tfrac23$.

\subsection{Desiderata}

We believe that the above guarantees do not reflect the full power of our algorithms, and more generally the full power of conditional L-continuity. The ``ideal" performance guarantee for \rankedBandit{} in our setting would use the greedy optimum as a benchmark, and would have a bound on regret that is free from the inefficiencies outlined in the discussion after Theorem~\ref{thm:regret-RankContZoom}. Furthermore, this guarantee would only rely on some general property of $\mathtt{Bandit}$ such as a bound on regret or contextual regret. We conjecture that such guarantee is possible for \rankedContextualZooming, and, perhaps under some assumptions, also for \MCzooming{} and \zooming.

Further, one would like to study the relative benefits of the new ``contextual" algorithms (\rankedContextualZooming{} and \MCzooming) and the prior work such as $\zooming$. The discussion Section~\ref{sec:provable-benchmark} suggests that the difference can be particularly pronounced when the pointwise mean has multiple peaks of similar value. In fact, we confirm this experimentally in Section~\ref{sec:evaluation-secondary}.

\section{Evaluation}
\label{sec:evaluation}

Let us evaluate the performance of the algorithms presented in Section \ref{sec:algorithms-prior} and Section~\ref{sec:ranked-contextual}. We summarize these algorithms in Table~\ref{tab:algs-prior}.

\begin{table}[t]
\caption{Algorithms for the \problem.}
\label{tab:algs-prior}
\vspace{3mm}

\begin{center}
\begin{tabular}{c|l|l}
$\naiveUCB$         & metric-oblivious algorithms:
                    & Section~\ref{sec:prior-RBA} \\
$\naiveEXP$         & ~~~~ranked versions of $\UCB$ and $\EXP$
                    & \\ \hline
$\rankedGridUCB$    & simple metric-aware algorithms:
                    & Section~\ref{sec:prior-rankedMetric}\\
$\rankedGridEXP$    & ~~~~ranked versions of $\gridUCB$ and $\gridEXP$
                    & \\ \hline
$\zooming$          & the ranked version of the zooming algorithm
                    & Section~\ref{sec:prior-rankedMetric}\\ \hline
                    & contextual algorithms: \\
$\MCzooming$                & ~~~~``light-weight''(based on the zooming algorithm)
                            & Section~\ref{sec:ranked-contextual-light} \\
$\rankedContextualZooming$  & ~~~~``full-blown'' (based on $\contextualZooming$).
                            & Section~\ref{sec:ranked-contextual-heavy}
\end{tabular}
\end{center}
\end{table}

\OMIT{ 
 ``metric-oblivious" \naiveUCB{} and \naiveEXP, ``metric-aware" non-contextual \rankedGridUCB, \rankedGridEXP{} and \zooming, and contextual $\rankedContextualZooming$
and \MCzooming.
} 

In all \UCB-based algorithms in Table~\ref{tab:algs-prior}, including all extensions of the zooming algorithm, one can damp exploration by replacing the $4\log(T)$ factor in~\eqref{eq:conf-rad} with $1$. Such change effectively makes the algorithm more \emph{optimistic}; it was found beneficial for \naiveUCB{} by~\citet{RBA-icml08}. We find (see Section~\ref{sec:eval-optimistic}) that this change greatly improves the average performance in our experiments. So, by a slight abuse of notation, we will assume this change from now on.


\subsection{Experimental setup}

Using the generative model from Section \ref{sec:model} (Algorithm~\ref{alg:users} with~\eqref{eq:model-q}), we created a document collection with
	$|X| = 2^{15} \approx 32,000$
documents\footnote{This is a realistic number of documents that may be considered in detail for a typical web search query after pruning very unlikely documents.} in a binary \eps-exponential tree metric space with $\epsilon = 0.837$ (and constant $c=1$, see Section~\ref{sec:defns-example}). The value for $\epsilon$ was chosen so that the most dissimilar documents in the collection still have a non-trivial similarity, as may be expected for web documents. Each document's expected relevance $\mu(x)$ was set by first identifying a small number of ``peaks" $y_i \in X$, choosing $\mu(\cdot)$ for these documents, and then defining the relevance of other documents as the minimum allowed while obeying L-continuity and a background relevance \hlc{1}{rate} $\mu_0$:
\begin{align}\label{eq:simulation-mu}
	\mu(x) \triangleq \max(\mu_0,\; \tfrac12 - \textstyle{\min_i}\, \D(x,y_i)).
\end{align}
For internal nodes in the tree, $\mu$ is defined bottom-up (from leaves to the root) as the mean value of all children nodes. \hlFR{As a result, we obtain a set of documents $X$ where each document $x \in X$ has an expected click probability $\mu(x)$ that obeys L-continuity.}

Our simulation was run over a 5-slot ranked bandit setting, learning the best 5 documents. We evaluated over  300,000 user visits sampled from $\P$ per Algorithm~\ref{alg:users}. Performance within 50,000 impressions, typical for the number of times relatively frequent queries are seen by commercial search engines in a month, is essential for any practical applicability of this approach. \hlFR{However, we also measure performance for a longer time period to obtain a deeper understanding of the  convergence properties of the algorithms.}

We consider two models for $\mu(\cdot)$ in~\eqref{eq:simulation-mu}. In the first model, two ``peaks" $\{y_1, y_2\}$ are selected at random with $\mu(\cdot) = \tfrac12$, and $\mu_0$ set to 0.05. The second model is less ``rigid" (and thus more realistic): the relevant documents $y_i$ and their expected relevance rates $\mu(\cdot)$ are selected according to a Chinese Restaurant Process \citep{Aldous85Exchangeability}  \hlFR{ with parameters $n\!=\!20$ and $\theta\!=\!2$, and setting $\mu_0 = 0.01$. The Chinese Restaurant Process is inspired by customers coming in to a restaurant with an infinite number of tables, each with infinite capacity. At time $t$, a customer arrives and can choose to sit at a new table with probability $\theta / (t-1+\theta)$, and otherwise sits at an already occupied table with probability proportional to the number of customers already sitting at that table. By considering each table as equivalent to a peak in the distrubtion, this leads to a set of peaks with expected relevance rates distributed accoring to a power law. Following~\mbox{\cite{RBA-icml08}}, we assign users to one of the peaks, then select relevant documents so as to obey the expected relevance rate  $\mu(x)$ for each document $x$.}

As baselines we use an algorithm ranking the documents at random, and the (offline) greedy algorithm discussed in Section~\ref{sec:prior-RBA}.

\OMIT{another ranking them according in an offline greedy manner (see Section~\mbox{\ref{sec:provable-prior}} for a discussion of the offline greeding ranking).}

\begin{figure*}[p]
\centering{\includegraphics[width=15cm,height=9cm]{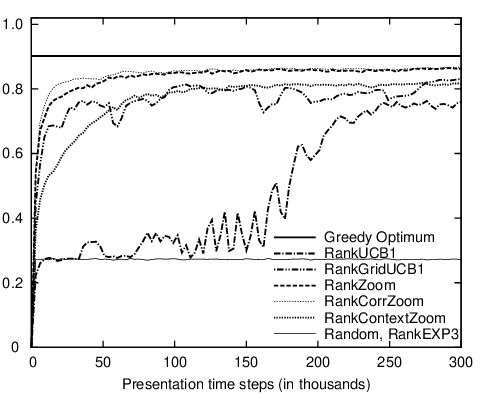}}
\caption{The learning algorithms on 5-slot problem instances with two relevance peaks.}
\label{fig1a}
\end{figure*}

\begin{figure*}[p]
\centering{\includegraphics[width=15cm,height=9cm]{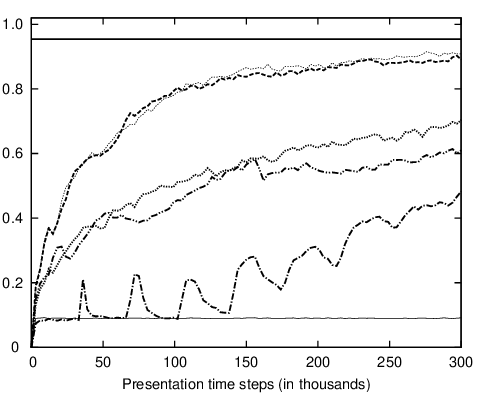}}
\caption{The learning algorithms on 5-slot problem instances with random relevance rates $\mu(\cdot)$ selected according to the Chinese Restaurant Process.}
\label{fig1b}
\end{figure*}

\subsection{Main experimental results}

Our experimental results are summarized in Figure~\ref{fig1a} and Figure~\ref{fig1b}.

\naiveEXP{} and \naiveUCB{} perform as poorly as picking documents randomly: the three curves are indistinguishable. This is due to the large number of available documents and slow convergence rates of these algorithms. Other algorithms that explore all strategies (such as REC \citep{RBA-icml08}) would perform just as poorly. This result is consistent with results reported by \cite{RBA-icml08} on just  $50$ documents. \hlFR{On the other hand, algorithms that progressively refine the space of strategies explored perform much better.}

$\MCzooming$ achieves the best empirical performance, converging rapidly to near-optimal rankings. $\zooming$ is a close second. The theoretically preferred $\rankedContextualZooming$ comes third, with a significant gap. This appears to be due to the much larger branching factor in the strategies activated by $\rankedContextualZooming$ slowing down the convergence. (However, as we investigate in Section~\ref{sec:evaluation-secondary}, $\rankedContextualZooming$ may significantly outperform the other algorithms if $\mu$ has multiple peaks with similar values.)



\subsection{``Optimistic'' vs. ``pessimistic'' \UCB-style algorithms}
\label{sec:eval-optimistic}

We find that the ``optimistic'' \UCB-style algorithms (obtained by replacing the $4\log(T)$ factor in~\eqref{eq:conf-rad} with $1$) perform dramatically better than their ``pessimistic'' counterparts. In Figure~\ref{fig1a-plus} and Figure~\ref{fig1b-plus} we compare $\naiveUCB$ and $\zooming$ with their respective ``pessimistic'' versions (which are marked with a ``-~\!-'' after the algorithm name). We saw a similar increase in performance for other \UCB-style algorithms, too.

\begin{figure*}[p]
\centering{\includegraphics[width=15cm,height=8cm]{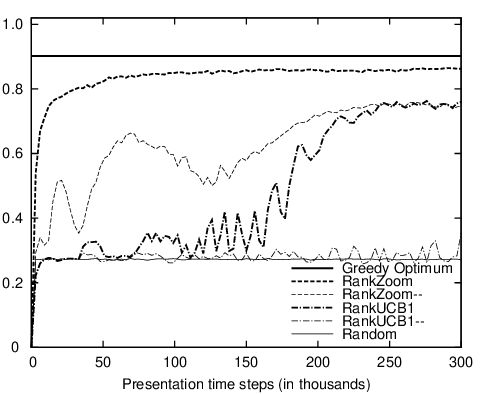}}
\caption{``Optimistic'' vs. ``pessimistic'' \UCB-style algorithms: \newline
The learning algorithms on 5-slot problem instances with two relevance peaks.}
\label{fig1a-plus}
\end{figure*}

\begin{figure*}[p]
\centering{\includegraphics[width=15cm,height=8cm]{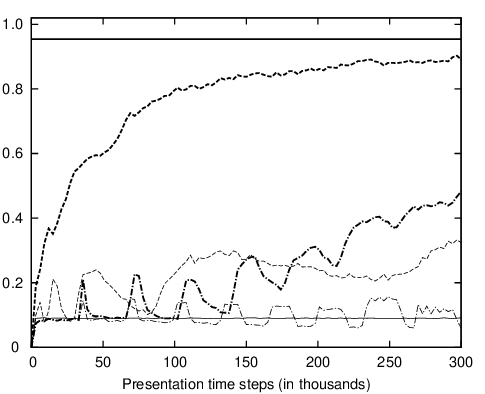}}
\caption{``Optimistic'' vs. ``pessimistic'' \UCB-style algorithms: \newline
The learning algorithms on 5-slot problem instances with random relevance rates $\mu(\cdot)$ selected according to the Chinese Restaurant Process.}
\label{fig1b-plus}
\end{figure*}

\subsection{Secondary experiment}
\label{sec:evaluation-secondary}

\begin{figure*}[t]
\centering
\includegraphics[width=15cm,height=8cm]{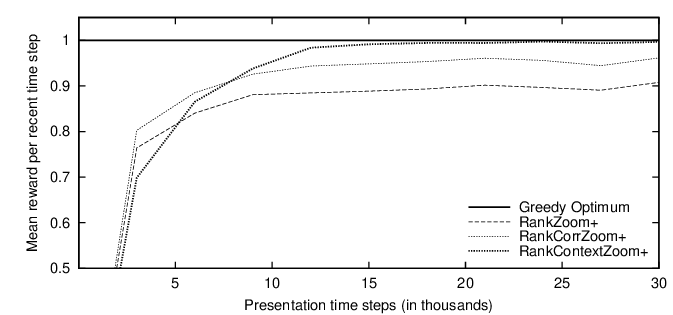}
\vspace{-3mm}
\caption{Zooming-style algorithms in a two-slot setting over a small document collection.}
\label{fig2}
\end{figure*}

As discussed in Section~\ref{sec:provable-benchmark}, some \rankedBandit-style algorithms may converge to a suboptimal ranking if $\mu$ has multiple peaks with similar values. To investigate this,
we designed a small-scale experiment presented in Figure~\ref{fig2}. We generated a small collection of 128 documents using the same setup with two ``peaks", and assumed $2$ slots. Each peak corresponds to a half of the user population, with peak value $\mu = \tfrac12$ and background value $\mu_0 = 0.05$.

We see that $\rankedContextualZooming$ converges more slowly than the other zooming variants, but eventually outperforms them. This confirms our intuition, and suggests that $\rankedContextualZooming$ may eventually outperform the other algorithms on a larger collection, such as that used for Figures 1 and 2.

\OMIT{ 
To summarize, the \MCzooming{} algorithm performs best in this setting, converging much more rapidly than
previous approaches. Correlation information improves its performance somewhat over \zooming. On the other hand, \rankedContextualZooming, while theoretically
superior converges more slowly.
} 

\section{Proof of Lemma~\ref{lm:extension-mu-short} (extending $\mu$ from leaves to tree nodes)}
\label{sec:extending-mu}

Recall that Lemma~\ref{lm:extension-mu-short} is needed to define the generative model in Section~\ref{sec:model}. We will prove a slightly more general statement:

\begin{lemma}\label{lm:extension-mu}
Let $\D$ be the shortest-paths metric of an edge-weighted rooted tree with node set $V$ and leaf set $X$. Let $\mu:X\to [a,b]$ be an L-continuous function on $(X,\D)$. Then $\mu$ can be extended to $V$ so that $\mu:V \to [a,b]$ is L-continuous w.r.t. $(V,\D)$.
\end{lemma}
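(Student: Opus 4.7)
The plan is to invoke the classical McShane--Whitney Lipschitz-extension construction and then patch up the range constraint with a truncation. Specifically, I would define the extension
\begin{align*}
\tilde\mu(v) \triangleq \min_{x\in X}\;\bigl(\mu(x) + \D(v,x)\bigr)
	\quad\text{for all $v\in V$,}
\end{align*}
which is well-defined since $X$ is finite.

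First I would check that $\tilde\mu$ agrees with $\mu$ on leaves: for $x\in X$ the upper bound $\tilde\mu(x)\leq \mu(x)$ is immediate (take $x$ itself in the minimum), and the matching lower bound uses L-continuity of $\mu$ on $X$ to show $\mu(y)+\D(x,y)\geq \mu(x)$ for every $y\in X$. Next I would verify that $\tilde\mu$ is 1-Lipschitz on $(V,\D)$: for any $u,v\in V$ and any $x\in X$, the triangle inequality gives $\tilde\mu(u)\leq \mu(x)+\D(u,x)\leq \mu(x)+\D(v,x)+\D(u,v)$; taking the minimum over $x$ and then swapping $u,v$ yields $|\tilde\mu(u)-\tilde\mu(v)|\leq \D(u,v)$.

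Finally, I would enforce the range $[a,b]$ by truncation:
\begin{align*}
\mu^*(v) \triangleq \max\bigl(a,\; \min(b,\; \tilde\mu(v))\bigr).
\end{align*}
Since $x\mapsto \max(a,x)$ and $x\mapsto \min(b,x)$ are each 1-Lipschitz on $\R$, the composition $\mu^*$ is 1-Lipschitz on $(V,\D)$. Moreover, because $\mu(X)\subseteq [a,b]$ to begin with, the truncation has no effect on leaves, so $\mu^*$ still agrees with $\mu$ on $X$.

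I do not expect any substantial obstacle: the McShane--Whitney half is a textbook calculation, and the only mild subtlety is the range constraint, which the raw McShane extension does not automatically respect (the lower bound $\tilde\mu(v)\geq a$ is free, but the upper bound $\tilde\mu(v)\leq b$ can fail in general). The truncation step resolves this cleanly without disturbing the Lipschitz constant or the values on $X$.
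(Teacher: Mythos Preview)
Your proof is correct and takes a genuinely different route from the paper. The paper does a tree-specific inductive construction: for each internal node $x$ it considers only the leaves $\mathcal{L}(x)$ in the subtree below $x$, defines the feasible interval $I(x)=[\sup_{z\in\mathcal{L}(x)}(\mu(z)-\D(x,z)),\,\inf_{z\in\mathcal{L}(x)}(\mu(z)+\D(x,z))]$, shows $I(x)$ is nonempty and overlaps $[a,b]$, and then assigns values top-down from the root, checking at each step that the child's interval is reachable from the parent's value within the edge length. Your argument, by contrast, ignores the tree structure entirely and applies the standard McShane--Whitney extension over the whole metric space, followed by a truncation to $[a,b]$. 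This is shorter, entirely standard, and in fact proves a stronger statement (the tree hypothesis is never used; any metric space would do). The paper's construction buys nothing extra for the lemma as stated; its only potential advantage is that the extended value at a node is constrained by the leaves in its own subtree, which could matter if one later wanted a ``tree-compatible'' extension, but the paper does not exploit this.
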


\begin{proof}
For each $x\in V$, let $\mathcal{L}(x)$ be the set of all leaves in the subtree rooted at $x$. For each $z\in \mathcal{L}(y)$ the assignment $\mu(x)$ should satisfy
\begin{align*}
\mu(z) - \D(x,z) \leq \mu(x) \leq \mu(z) + \D(x,z)
\end{align*}
Thus $\mu(x)$ should lie in the interval
	$I(x) \triangleq [\mu^-(x), \mu^+(x) ]$,
where
\begin{align*}
\mu^-(x) &\triangleq \textstyle{\sup_{z\in \mathcal{L}(x)}}\; \mu(z) - \D(x,z), \\
\mu^+(x) &\triangleq \textstyle{\inf_{z\in \mathcal{L}(x)}}\; \mu(z) + \D(x,z).
\end{align*}
This interval is always well-defined, i.e. $\mu^-(x)\leq \mu^+(x)$. Indeed, if not then for some
	$z,z'\in \mathcal{L}(x)$
\begin{align*}
	\mu(z)-\D(x,z) 	 &> \mu(z')+ \D(x,z') \\
	\mu(z) - \mu(z') &> \D(x,z) + \D(x,z') \geq \D(z,z'),
\end{align*}
contradiction, claim proved. Note that $\mu^+(x)\geq a$ and $\mu^-(x)\leq b$, so the intervals $I(x)$ and $[a,b]$ overlap.

Using induction on the tree, we will construct values $\mu(x)$, $x\in V$ such that the Lipschitz condition
\begin{align*}
|\mu(x)-\mu(y)| \leq \D(x,y) \qquad \text{for all $x,y\in X$}
\end{align*}
holds whenever $x$ is a parent of $y$. For the root $x_0$, let $\mu(x_0)$ be an arbitrary value in the interval $I(x_0)\cap [a,b]$. For the induction step, suppose for some $x$ we have chosen $\mu(x)\in I(x)\cap [a,b]$ and $y$ is a child of $x$. We need to choose $\mu(y)\in I(y)\cap [a,b]$ so that
	$|\mu(x)-\mu(y)| \leq \D(x,y)$.
Note that
\begin{align*}
\mu(x)
	&\geq \mu^-(x)
	\geq \textstyle{\sup_{z\in \mathcal{L}(y)}}\; \left[\, \mu(z) - \D(x,y) - \D(y,z) \,\right] \\
	&= \mu^-(y) - \D(x,y), \\
\mu(x)
	&\leq \mu^+(x)
	\leq \textstyle{\inf_{z\in \mathcal{L}(y)}}\; \left[\, \mu(z) + \D(x,y) + \D(y,z) \,\right] \\
	&= \mu^+(y) + \D(x,y).
\end{align*}
It follows that $I(y)$ and
	$[\mu(x)-\D(x,y),\, \mu(x)+\D(x,y)]$
have a non-empty intersection. Therefore, both intervals have a non-empty intersection with $[a,b]$. So we can choose $\mu(y)$ as required. This completes the construction of $\mu()$ on $V$.

To check that $\mu$ is Lipschitz-continuous on $V$, fix $x,y\in V$, let $P$ be the $x\to y$ path in the tree, and note that
\begin{align*}
|\mu(x) - \mu(y)|
	&\leq \textstyle{\sum_{(u,v)\in P}}\, |\mu(u)-\mu(v)| \\
	&\leq \textstyle{\sum_{(u,v)\in P}}\, \D(u,v)
	= \D(x,y).\qquad\qedhere
\end{align*}
\end{proof}

\section{Proof of Theorem~\ref{thm:expressiveness} (expressiveness of the model)}
\label{sec:proof-conditional}

Recall that a proof sketch for Theorem~\ref{thm:expressiveness} was given in Section~\ref{sec:model}.
In this section we complete this proof sketch by proving Equation~\refeq{eq:app-expressiveness-conditional}.

\xhdr{Notation.}
Let us introduce the notation (some of it is from the proof sketch).

For a tree node $u$, let $\mathcal{T}_u$ be the node set of the subtree rooted at $u$. For convenience (and by a slight abuse of notation) we will write $u=b$, $b\in \{0,1\}$ to mean $\pi(u) = b$.

Fix documents $x,y\in X$. We focus on the key event, denoted $\mathcal{E}$,  that no mutation happened on the $x\to y$ path. Recall that in Algorithm 1, for each tree node $u$ with parent $v$ we assign
	$\pi(u) \leftarrow M_u(\pi(v))$,
where 	
	$M_u: \{0,1\} \to \{0,1\}$
is a random mutation which flips the input bit $b$ with probability $q_b(u)$. If $M_u$ is the identity function, then we say that no mutation happened at $u$. We say that no mutation happened on the $x\to y$ path if no mutation happened at each node in $N_{xy}$, the set of all nodes on the $x\to y$ path except $z$. This event is denoted $\mathcal{E}$; note that it implies
	$\pi(x) = \pi(y) = \pi(z)$.
Its complement $\bar{\mathcal{E}}$ is, intuitively, a low-probability ``failure event".

Fix a subset of documents $S\subset X$. Recall that $Z_S$ denotes the event that all documents in $S$ are irrelevant, i.e. $\pi(x) = 0$ for all $x\in S$. 

\xhdr{What we need to prove.} We need to prove~\eqref{eq:app-expressiveness-conditional}, which states that
\begin{align*}
\Pr[\bar{\mathcal{E}} \,|\, Z_S] \leq 3\, \Pr[\bar{\mathcal{E}}].
\end{align*}

It suffices to prove the following lemma:

\begin{lemma}\label{lm:app-expressiveness-conditional}
$ \Pr[\bar{\mathcal{E}} \,|\, Z_S] \leq
		\Pr[\bar{\mathcal{E}}] \times (2/\Pr[\mathcal{E}])
$.
\end{lemma}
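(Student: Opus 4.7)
The plan is to apply Bayes' rule and reduce the claim to comparing $\Pr[Z_S\mid \bar{\mathcal{E}}]$ with $\Pr[Z_S\mid \mathcal{E}]$, then exploit the Markov structure of the Bayesian tree network together with the monotonicity hint from the proof sketch.

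Bayes' rule gives $\Pr[\bar{\mathcal{E}}\mid Z_S] = \Pr[Z_S\mid \bar{\mathcal{E}}]\,\Pr[\bar{\mathcal{E}}]/\Pr[Z_S]$, and the bound $\Pr[Z_S]\geq \Pr[Z_S\mid \mathcal{E}]\,\Pr[\mathcal{E}]$ reduces the lemma to
\[
\Pr[Z_S\mid \bar{\mathcal{E}}] \leq 2\,\Pr[Z_S\mid \mathcal{E}].
\]
The event $\mathcal{E}$ depends only on mutations at nodes of $N_{xy}$, which are strict descendants of $z$; meanwhile $\pi(z)$ is a function of $\pi(\mathrm{root})$ and mutations on the root-to-$z$ path. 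Hence $\mathcal{E}\perp\pi(z)$. Let $T_z$ be the subtree rooted at $z$, split $S$ into $S_{\text{in}} = S\cap T_z$ and $S_{\text{out}} = S\setminus T_z$, and write $A_b\triangleq \Pr[\pi(z)=b]$ and $o_b \triangleq \Pr[Z_{S_{\text{out}}}\mid \pi(z)=b]$. By the Markov property, conditioning on $\pi(z)=b$ decouples $T_z$ from its complement, so
\[
\Pr[Z_S\mid \pi(z)=b,\mathcal{F}] = o_b\cdot \Pr[Z_{S_{\text{in}}}\mid \pi(z)=b,\mathcal{F}] \quad\text{for } \mathcal{F}\in\{\mathcal{E},\bar{\mathcal{E}}\}.
\]

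The heart of the argument is a monotonicity lemma formalising the proof sketch's hint. Under the assumption $\max(q_0(u),q_1(u))\leq \tfrac12$, I would prove by induction on the tree distance that for any ancestor $v$ and descendant $s$,
\[
\Pr[\pi(s)=0\mid \pi(v)=0] \geq \Pr[\pi(s)=0\mid \pi(v)=1];
\]
the single-edge base case reduces to $1-q_0(u)\geq q_1(u)$, which follows from $q_0(u)+q_1(u)\leq 1$. Combined with the Markov factorisation of $T_z$ into conditionally independent pieces below each $u\in N_{xy}$ and below each child of $z$ off the $x$--$y$ path, this upgrades to a multivariate statement: the conditional probability $\Pr[Z_{S_{\text{in}}}\mid \pi(z), \pi|_{N_{xy}}]$ is coordinatewise decreasing in $(\pi(z),\pi|_{N_{xy}})$. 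Since $\mathcal{E}$ together with $\pi(z)=0$ forces $\pi$ to be zero on the separating set $\{z\}\cup N_{xy}$---the assignment most favourable for $Z_{S_{\text{in}}}$---averaging over the mutations under $\bar{\mathcal{E}}$ yields, writing $p\triangleq \Pr[Z_{S_{\text{in}}}\mid \pi(z)=0,\mathcal{E}]$,
\[
\Pr[Z_{S_{\text{in}}}\mid \pi(z)=b,\bar{\mathcal{E}}] \;\leq\; p \qquad \text{for both } b\in\{0,1\}.
\]
The same monotonicity lemma, applied above $z$, gives $o_1\leq o_0$.

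Assembling:
\[
\Pr[Z_S\mid \bar{\mathcal{E}}] = \sum_b A_b o_b \Pr[Z_{S_{\text{in}}}\mid \pi(z)=b,\bar{\mathcal{E}}] \leq p \sum_b A_b o_b \leq p\, o_0,
\]
while $\Pr[Z_S\mid \mathcal{E}]\geq A_0 o_0 p$ by keeping only the $b=0$ summand. Because $\mu(z)\leq \tfrac12$ (the hypothesis on $\mu$, extended from $X$ to $V$ by Lemma~\ref{lm:extension-mu}), one has $A_0 = 1-\mu(z)\geq \tfrac12$, so the ratio of the two bounds is at most $1/A_0\leq 2$, completing the reduction. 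The main obstacle is lifting the single-edge monotonicity to the joint form needed on the separating set $\{z\}\cup N_{xy}$; once that is in hand, the factor of $2$ pops out cleanly from $\mu(z)\leq \tfrac12$.
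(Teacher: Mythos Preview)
Your reduction via Bayes' rule, the independence $\mathcal{E}\perp\pi(z)$, the inside/outside split, and the final arithmetic extracting the factor $2$ from $\mu(z)\leq\tfrac12$ are all correct and match the paper's argument exactly. The paper likewise reduces to the comparison $\Pr[Z_S\mid M,\pi(z)=b]\leq \Pr[Z_S\mid\mathcal{E},\pi(z)=0]$ (its Claim~\ref{cl:no-mutations-is-better}) and then lower-bounds $\Pr[Z_S]$ by the $b=0$ term times $\Pr[\mathcal{E}]$.

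The genuine gap is your one-line justification of $o_1\leq o_0$. Your monotonicity lemma is stated for an \emph{ancestor} $v$ conditioning a \emph{descendant} $s$; but $z$ is not an ancestor of any leaf in $S_{\text{out}}$, so ``the same monotonicity lemma, applied above $z$'' does not apply as written. Going from $z$ to a leaf in $S_{\text{out}}$ means first climbing to a common ancestor and then descending into another branch, so you are conditioning on a descendant and asking about the effect elsewhere in the tree. Even the singleton case already requires an extra step (Bayes to reverse the direction of conditioning), and lifting it to the joint event $Z_{S_{\text{out}}}$ is not a product-over-branches argument because, given $\pi(z)$, the leaves in $S_{\text{out}}$ are \emph{not} conditionally independent: they are coupled through the unobserved values on the root-to-$z$ path. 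The paper treats this as the hardest sublemma (its Claim~\ref{cl:zero-is-better-disj}), proving it by a separate induction that conditions simultaneously on the root value and on $\pi(z)$, and invoking a reversed-direction corollary (Corollary~\ref{cor:zero-is-better-singleton}) at the inductive step. Alternatively you could invoke positive association (FKG) for the tree Markov field with attractive edge kernels $q_0,q_1\leq\tfrac12$, but that too needs its own argument; either way, this step deserves more than a sentence.
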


\noindent (Indeed, letting $p=\Pr[\bar{\mathcal{E}}]$ it holds that
$\Pr[\bar{\mathcal{E}} \,|\, Z_S]
	\leq \min\left(1,\, \tfrac{2p}{1-p} \right)
	\leq 3\,p$. )

\begin{note}{Remark.}
Lemma~\ref{lm:app-expressiveness-conditional} inherits assumptions (\ref{eq:construction-consistence}-\ref{eq:construction-small}) on the mutation probabilities. Specifically for this Lemma, the upper bound~\refeq{eq:construction-consistence} on mutation probabilities can be replaced with a much weaker upper bound:
\begin{align}\label{eq:weak-assn}
\max(q_0(u),\, q_1(u)) \leq \tfrac12
	\quad\text{for each tree node $u$}.
\end{align}
\end{note}

Our goal is to prove Lemma~\ref{lm:app-expressiveness-conditional}. 
In a sequence on claims, we will establish that
\begin{align}\label{eq:zero-is-better}
\Pr[Z_S \,|\, z=0] \geq \Pr[Z_S \,|\, z=1].
\end{align}
Intuitively,~\refeq{eq:zero-is-better} means that the low-probability mutations are more likely to zero out a given subset of the leaves if the value at some fixed internal node is zero (rather than one).

\subsection{Using Equation~\refeq{eq:zero-is-better} to prove Lemma~\ref{lm:app-expressiveness-conditional}}

Let us extend the notion of mutation from a single node to the $x\to y$ path. Recall that $N_{xy}$ denotes the set of all nodes on this path except $z$. Then the individual node mutations $\{M_u: u\in N_{xy}\}$ collectively provide a mutation on $N_{xy}$, which we define simply as a function
	$M: N_{xy}\times \{0,1\} \to \{0,1\}$
such that
	$\pi(\cdot) = M(\cdot,\pi(z))$.
Crucially, $M$ is chosen independently of $\pi(z)$ (and of all other mutations). Let $\mathcal{M}$ be the set of all possible mutations of $N_{xy}$. By a slight abuse of notation, we treat the event $\mathcal{E}$ as the identity mutation.

\begin{claim}\label{cl:no-mutations-is-better}
Fix $M \in \mathcal{M}$ and $b\in\{0,1\}$. Then
\begin{align*}
	\Pr[Z_S \,|\, M, \pi(z)=b] \leq \Pr[Z_S \,|\, \mathcal{E}, \pi(z)=0].
\end{align*}
\end{claim}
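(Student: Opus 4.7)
The plan is to condition on $M$ and $\pi(z){=}b$, factor the resulting probability using the Markov property of Algorithm~\ref{alg:users}, and then compare factor by factor with $\Pr[Z_S\mid\mathcal{E},\pi(z){=}0]$. Conditioning on $M$ and $\pi(z){=}b$ determines $\pi(u)$ for every $u$ on the $x\!\to\!y$ path; write the determined vector as $\pi_*$. Partition the leaves in $S$ into three buckets: (a) on-path leaves $\{x,y\}\cap S$; (b) leaves in a ``side subtree'' hanging off some $u\in N_{xy}\cup\{z\}$, grouped by the spine ancestor $u$; (c) leaves not descended from $z$, denoted $S_{\rm above}$. By the tree Markov property, conditional on $\pi_*$ the different side subtrees and the above-$z$ region are mutually independent, so
\[
\Pr[Z_S\mid M,\pi(z){=}b]=\mathbf{1}_{\text{path consistent}}\cdot\!\!\prod_{u\in N_{xy}\cup\{z\}}\!\! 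F_u(\pi_*(u))\cdot G(b),
\]
where $F_u(c)=\Pr[\text{$S$-leaves in side subtrees of $u$ are all $0$}\mid\pi(u){=}c]$ and $G(b)=\Pr[Z_{S_{\rm above}}\mid\pi(z){=}b]$. Under $(\mathcal{E},0)$, $\pi_*\equiv 0$, all indicators are $1$, and the RHS of the claim becomes $\prod_u F_u(0)\cdot G(0)$. Thus it suffices to show $F_u(c)\le F_u(0)$ and $G(b)\le G(0)$.

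For the below-$z$ factors, I would prove a stronger local statement by bottom-up induction on subtree depth: for any internal node $v$ and any set $S'$ of leaves of the subtree rooted at $v$, $\Pr[Z_{S'}\mid\pi(v){=}0]\ge\Pr[Z_{S'}\mid\pi(v){=}1]$. If $H(c)$ denotes this probability and $a(c')$ is the inductively known analogous quantity at a child, then a one-step conditioning gives
\[
H(0)-H(1)=\bigl(1-q_0(v)-q_1(v)\bigr)\bigl(a(0)-a(1)\bigr),
\]
which is nonnegative using the weak assumption $q_0(v)+q_1(v)\le 1$ (implied by~\refeq{eq:weak-assn}) together with the inductive hypothesis $a(0)\ge a(1)$. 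Taking products over sibling subtrees (each factor in $[0,1]$) preserves monotonicity, which gives $F_u(c)\le F_u(0)$.

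For the above-$z$ factor I would propagate monotonicity \emph{upward} along the $z$-to-root path by Bayes' rule. A short computation shows that
\[
\Pr[\pi(\mathrm{parent}(w)){=}0\mid\pi(w){=}0]\ge \Pr[\pi(\mathrm{parent}(w)){=}0\mid\pi(w){=}1]
\]
holds precisely when $q_0(w)+q_1(w)\le 1$, once again supplied by~\refeq{eq:weak-assn}. Chaining these one-step inequalities up the $z$-to-root path, and combining with the below-side-subtree monotonicity applied to each side subtree hanging off this path, yields $G(b)\le G(0)$. The main obstacle is exactly this above-$z$ step: since the ``above-$z$'' region is not a subtree rooted at $z$, one has to reverse the Bayesian direction one edge at a time. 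This is precisely where the weaker hypothesis $\max(q_0,q_1)\le\tfrac12$ enters — it is both necessary and sufficient for the posterior-monotonicity at each reversal step — which matches the generality noted in the remark following Lemma~\ref{lm:app-expressiveness-conditional}.
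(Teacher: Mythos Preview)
Your approach is correct and shares the same core idea as the paper: factor $\Pr[Z_S\mid M,\pi(z){=}b]$ over the Markov structure along the $x\!\to\!y$ path, and then bound each factor using the monotonicity ``$\pi(\cdot){=}0$ is better for $Z_S$ than $\pi(\cdot){=}1$.'' The difference is one of packaging. The paper isolates this monotonicity as the standalone statement~\refeq{eq:zero-is-better} (proved separately via Claims~\ref{cl:zero-is-better-subset}, \ref{cl:zero-is-better-cond}, \ref{cl:zero-is-better-disj} and Corollary~\ref{cor:zero-is-better-singleton}), so the proof of Claim~\ref{cl:no-mutations-is-better} itself is a three-line factor-and-apply argument. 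You instead re-derive the needed monotonicity inline: your bottom-up induction for the $F_u$ factors is exactly Claim~\ref{cl:zero-is-better-subset}, and your ``upward Bayes'' argument for $G$ is a path-wise reorganization of Claim~\ref{cl:zero-is-better-disj} together with Corollary~\ref{cor:zero-is-better-singleton}. Your factorization (on-path leaves, side subtrees per spine node, above-$z$) is in fact cleaner than the paper's terse statement, where $S_u\triangleq S\cap\mathcal{T}_u$ for $u\in N_{xy}$ are nested rather than disjoint as written.

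One caution on the above-$z$ step: ``chaining'' posterior monotonicity edge by edge is not by itself enough; you need an induction of the form $G_{i+1}(0)\ge G_{i+1}(1)\Rightarrow G_i(0)\ge G_i(1)$, where $G_i(c)=\Pr[Z_{S_{\rm above}}\mid \pi(v_i){=}c]$, combining at each step the posterior inequality with the inductive hypothesis and the side-subtree monotonicity. This works, but your sketch elides the inductive structure. The paper handles this via the tree-depth induction in Claim~\ref{cl:zero-is-better-disj} (which in turn needs the conditional variant Claim~\ref{cl:zero-is-better-cond}); your path-wise version is a legitimate alternative but should be stated as an explicit induction rather than a ``chain.''
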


\begin{proof}
For each tree node $u$, let $S_u = S\cap \mathcal{T}_u$ be the subset of $S$ that lies in the subtree $\mathcal{T}_u$. Then by~\refeq{eq:zero-is-better}
\begin{align*}
\Pr[Z_S \,|\, M,\,\pi(z) = b]
	&= \textstyle{\prod_u} \;
		\Pr[Z_{S_u} \,|\, \pi(u) = M(u,b) ] \\
	&\leq \textstyle{\prod_u}\;
		\Pr[Z_{S_u} \,|\, \pi(u) = 0 ] \\
	&= \Pr[Z_S \,|\, \mathcal{E},\, \pi(z) = 0],
\end{align*}
where the product is over all tree nodes $u\in N_{xy}$ such that the intersection $S_u$ is non-empty.
\end{proof}

\begin{proof}[Proof of Lemma~\ref{lm:app-expressiveness-conditional}]
On one hand, by Claim~\ref{cl:no-mutations-is-better}
\begin{align*}
\Pr[Z_S \cap \bar{\mathcal{E}}]
	&=  \textstyle{\sum_{b,M}}\;
		\Pr[M]\; \Pr[z=b]\; \Pr[Z_S \,|\, M,\, z=b] \\
	&\leq  \textstyle{\sum_{b,M}}\;
		\Pr[M]\; \Pr[z=b]\; \Pr[Z_S \,|\, \,\mathcal{E},\, z=0] \\
	&= \Pr[\bar{\mathcal{E}}] \times \Pr[Z_S \,|\, \,\mathcal{E},\, z=0],
\end{align*}
where the sums are over bits $b\in \{0,1\}$ and all mutations
	$M\in \mathcal{M} \setminus \{ \mathcal{E}\}$.
On the other hand,
\begin{align*}
\Pr[Z_S]
	&= \textstyle{\sum_{b,M}}\;
		\Pr[M]\; \Pr[z=b]\; \Pr[Z_S \,|\, M,\, z=b] \\
\intertext{(where the sum is over $b\in \{0,1\}$ and $M\in \mathcal{M}$)}
	&\geq \Pr[\mathcal{E}] \; \Pr[z=0]\;
		\Pr[Z_S \,|\, \mathcal{E},\, z=0].
\end{align*}
Since $\Pr[z=0] \geq \tfrac12$, it follows that
\begin{align*}
\Pr[\bar{\mathcal{E}} \,|\, Z_S]
	&= \Pr[Z_S \cap \bar{\mathcal{E}}]  \,/\, \Pr[Z_S] \\
	&\leq 2\,\Pr[\bar{\mathcal{E}}] / \Pr[\mathcal{E}]. \qedhere
\end{align*}
\end{proof}

\subsection{Proof of Equation~\refeq{eq:zero-is-better}}

First we prove~\refeq{eq:zero-is-better} for the case $S \subset \mathcal{T}_z$, then we build on it to prove the (similar, but considerably more technical) case $S\cap T_z = \emptyset$. The general case follows since the events $Z_{S\cap \mathcal{T}_z}$ and $Z_{S\setminus \mathcal{T}_z}$ are conditionally independent given $\pi(z)$.

\begin{claim}\label{cl:zero-is-better-subset}
 If $S \subset \mathcal{T}_z$ then~\refeq{eq:zero-is-better} holds.
\end{claim}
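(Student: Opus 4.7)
The plan is to induct on the structure of the subtree $\mathcal{T}_z$, using the conditional independence of distinct subtrees given the value at their common ancestor and exploiting the bound~\refeq{eq:weak-assn} on mutation probabilities.

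\emph{Base case.} If $z$ is a leaf, then either $S=\emptyset$ or $S=\{z\}$. The first case is trivial (both sides equal $1$); in the second, $\Pr[Z_S\mid z=0]=1$ and $\Pr[Z_S\mid z=1]=0$.

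\emph{Inductive step.} Suppose $z$ has children $v_1,\ldots,v_m$ in the tree, and set $S_i \triangleq S\cap \mathcal{T}_{v_i}$ and $S_0 \triangleq S\cap\{z\}$. Since the mutations in different subtrees of $z$ are independent given $\pi(z)$, I will write
\begin{align*}
\Pr[Z_S \mid z=b]
  = \mathbf{1}[S_0=\emptyset \text{ or } b=0]\;\prod_{i=1}^m \Pr[Z_{S_i}\mid z=b].
\end{align*}
If $z\in S$ the RHS for $b=1$ is zero, so the inequality is immediate. Assume now $z\notin S$; it then suffices to show, for each $i$,
\begin{align}\label{eq:plan-child}
\Pr[Z_{S_i}\mid z=0]\;\geq\;\Pr[Z_{S_i}\mid z=1],
\end{align}
since a product of nonnegative numbers dominates another such product term-by-term.

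\emph{Single-child computation.} To prove~\eqref{eq:plan-child}, write $a \triangleq \Pr[Z_{S_i}\mid v_i=0]$ and $c\triangleq \Pr[Z_{S_i}\mid v_i=1]$. Conditioning on the mutation at $v_i$ and using $\Pr[v_i=1\mid z=0]=q_0(v_i)$ and $\Pr[v_i=0\mid z=1]=q_1(v_i)$,
\begin{align*}
\Pr[Z_{S_i}\mid z=0] - \Pr[Z_{S_i}\mid z=1]
  = \bigl(1 - q_0(v_i) - q_1(v_i)\bigr)\,(a-c).
\end{align*}
The first factor is nonnegative by~\refeq{eq:weak-assn} (each $q_b(v_i)\leq \tfrac12$). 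The second factor is nonnegative by the inductive hypothesis applied to the subtree $\mathcal{T}_{v_i}$ (with $v_i$ playing the role of $z$ and $S_i\subset \mathcal{T}_{v_i}$). This establishes~\eqref{eq:plan-child} and completes the induction.

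\emph{Anticipated obstacle.} The computation itself is a short two-line algebraic identity, so the main care is in setting up the induction cleanly: in particular, decoupling the possibility $z\in S$ (where the RHS vanishes outright) from the ``interior'' case, and verifying that the conditional independence of the children's subtrees given $\pi(z)$ lets the joint probability factor the way I claimed. Once that is in place, the nonnegativity $1-q_0(v_i)-q_1(v_i)\geq 0$ supplied by the weak assumption~\refeq{eq:weak-assn} is exactly what drives the inequality, and no further use of~\refeq{eq:construction-consistence} or~\refeq{eq:construction-small} is needed for this claim.
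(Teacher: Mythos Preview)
Your proof is correct and follows essentially the same approach as the paper: induction on the subtree $\mathcal{T}_z$, factoring $\Pr[Z_S\mid z=b]$ over the children via conditional independence, and then reducing to a single-child inequality that combines the inductive hypothesis with the bound $q_0+q_1\leq 1$ from~\refeq{eq:weak-assn}. Your closed-form identity $(1-q_0-q_1)(a-c)$ for the per-child difference is slightly more direct than the paper's sum manipulation, but the content is identical; your explicit treatment of the degenerate cases $S=\emptyset$ and $z\in S$ is also a bit cleaner, though in the paper's setting $S$ consists of leaves so the latter only arises in the base case.
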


\begin{proof}
Let us use induction the depth of $z$. For the base case, the case $x=y=z$. Then $S = \{z\}$ is the only possibility, and the claim is trivial.

For the induction step, consider children $u_i$ of $z$ such that the intersection $S_i \triangleq S\cap \mathcal{T}_{u_i}$ is non-empty. Let $u_1 \LDOTS u_k$ be all such children. For brevity, denote
	$Z_i \triangleq Z_{S_i}$, and
\begin{align*}
\nu_i(a|b) \triangleq \Pr[u_i=a \,|\, z=b],
	\quad a,b\in\{0,1\}.
\end{align*}
Note that
	$v_i(1,0) = q_0(x_i)$ and $v_i(0,1) = q_1(x_i)$.

Then for each $b\in \{0,1\}$ we have
\begin{align}
\Pr[Z_S \,|\, z=b]
	&= \textstyle{\prod_{i=1}^k}\;
		\Pr[Z_i \,|\, z=b]
	\label{cl:zero-is-better-subset-eq1}\\
\Pr[Z_i \,|\, z=b]
	&= \textstyle{\sum_{a\in \{0,1\}}}\;
		\nu_i(a|b) \; \Pr[Z_i \,|\, u_i=a].
	\label{cl:zero-is-better-subset-eq2}
\end{align}
By~\refeq{cl:zero-is-better-subset-eq1}, to prove the claim it suffices to show that
\begin{align} \label{cl:zero-is-better-subset-eq3}
\Pr[Z_i \,|\, z=0] \geq \Pr[Z_i \,|\, z=1]
\end{align}
holds for each $i$. By the induction hypothesis we have
\begin{align}\label{cl:zero-is-better-subset-eq4}
	\Pr[Z_i \,|\, u_i=0] \geq \Pr[Z_i \,|\, u_i=1].
\end{align}
Combining~\refeq{cl:zero-is-better-subset-eq4} and~\refeq{eq:weak-assn}, and noting that by~\refeq{cl:zero-is-better-subset-eq2} we have
	$\nu_i(0|0)\geq \nu_i(0|1)$,
it follows that
\begin{align*}
\Pr[Z_i \,|\, z=0] - \Pr[Z_i \,|\, z=1] &  \\
	&\hspace{-35mm}= \textstyle{\sum_{a\in \{0,1\}}}\;
		\Pr[Z_i \,|\, u_i=a] \;
			\left(\, \nu_i(a|0) - \nu_i(a|1) \,\right) \\
	&\hspace{-35mm} \geq
		\Pr[Z_i \,|\, u_i=1] \quad
			\textstyle{\sum_{a\in \{0,1\}}}\;
			\left(\, \nu_i(a|0) - \nu_i(a|1) \,\right) \\
	&\hspace{-35mm}= 0
\end{align*}
because
	$\nu_i(0|0) + \nu_i(1|0) = \nu_i(0|1) + \nu_i(1|1) = 1 $.
\end{proof}

\begin{corollary}\label{cor:zero-is-better-singleton}
Consider tree nodes $r,v,w$ such that $r$ is an ancestor of $v$ which in turn is an ancestor of $w$. Then for any $c\in\{0,1\}$
\begin{align*}
\Pr[u=0 \,|\, w=0,\, r=c] \geq \Pr[u=0 \,|\, w=1,\, r=c].
\end{align*}
\end{corollary}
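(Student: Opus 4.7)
The plan is to reduce the corollary to Claim~\ref{cl:zero-is-better-subset} via Bayes' rule and the Markov property of the Bayesian tree network of Algorithm~\ref{alg:users}. (I read the ``$u$'' in the statement as a typo for $v$, since $u$ is never defined.) In spirit this is a monotone-likelihood-ratio statement: conditioning on the descendant $w$ being $0$ makes the intermediate node $v$ more likely to be $0$ than conditioning on $w=1$, even after fixing the value of the ancestor $r$.

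First I would observe that Algorithm~\ref{alg:users} generates $\pi$ by top-down independent mutations. Consequently, conditional on $\pi(v)$, the value $\pi(w)$ is independent of $\pi(r)$, since the mutations performed below $v$ are sampled independently of the mutations above. In particular,
\[
\Pr[w=a \mid v=b,\, r=c] = \Pr[w=a \mid v=b] \qquad \forall a,b,c\in\{0,1\}.
\]

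Next I would apply Bayes' rule, conditioning everywhere on $r=c$. Writing $p = \Pr[v=0\mid r=c]$, $\alpha_b = \Pr[w=b\mid v=0]$, and $\beta_b = \Pr[w=b\mid v=1]$, the Markov observation gives
\[
\Pr[v=0\mid w=b,\, r=c]
=\frac{p\,\alpha_b}{p\,\alpha_b+(1-p)\,\beta_b}.
\]
A one-line cross-multiplication shows that the desired monotonicity in $b$ is equivalent to $\alpha_0\beta_1\geq \alpha_1\beta_0$; using $\alpha_1=1-\alpha_0$ and $\beta_1=1-\beta_0$ this simplifies to the single inequality
\[
\Pr[w=0\mid v=0] \geq \Pr[w=0\mid v=1].
\]

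Finally I would invoke Claim~\ref{cl:zero-is-better-subset} with the role of ``$z$'' played by $v$ and with the singleton subset $S=\{w\}\subset \mathcal{T}_v$; this is exactly the displayed inequality. The degenerate cases ($p\in\{0,1\}$, or a conditioning event of probability zero) are handled routinely, since both sides of the target inequality then collapse to the same value. I do not anticipate a substantive obstacle: the corollary is essentially a short Bayes-rule repackaging of Claim~\ref{cl:zero-is-better-subset}, with the only subtlety being the Markov step that lets us drop the conditioning on $r$ inside $\Pr[w=\cdot\mid v=\cdot,r=c]$.
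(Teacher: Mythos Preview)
Your proof is correct and follows essentially the same route as the paper's own proof: Bayes' rule (in the probability space conditioned on $r=c$) reduces the statement to the likelihood-ratio inequality $\Pr[w=0\mid v=0]\geq\Pr[w=0\mid v=1]$, which is exactly Claim~\ref{cl:zero-is-better-subset} with $z=v$ and the singleton $S=\{w\}$. The only cosmetic differences are that you spell out the Markov step explicitly (the paper just says it will ``omit conditioning on $\{r=c\}$'') and you reduce to a single inequality, whereas the paper records the symmetric pair $\Pr[w=b\mid u=b]\geq\Pr[w=b\mid u=1-b]$ for both $b\in\{0,1\}$ and argues via the monotonicity of the odds ratio; one minor omission is the paper's remark that applying the Claim to $S=\{w\}$ formally requires truncating $\mathcal{T}_w$ so that $w$ becomes a leaf.
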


\begin{proof}
We claim that for each $b\in\{0,1\}$
\begin{align}\label{eq:zero-is-better-singleton}
\Pr[w=b \,|\, u=b] \geq \Pr[w=b \,|\, u=1-b].
\end{align}
Indeed, truncating the subtree $\mathcal{T}_w$ to a single node $w$ and
specializing Lemma~\ref{cl:zero-is-better-subset} to a singleton set $S = \{w\}$ (with $z=u$) we obtain~\refeq{eq:zero-is-better-singleton} for $b=0$. The case $b=1$ is symmetric.

Now, for brevity we will omit conditioning on $\{r=c\}$ in the remainder of the proof. (Formally, we will work on in the probability space obtained by conditioning on this event.) Then for each $b\in \{0,1\}$
\begin{align*}
&\Pr[u=0 \,|\, w=b] \\
&\qquad = \frac{\Pr[u=0 \wedge w=b]}
		    {\Pr[u=0 \wedge w=b] \cup \Pr[u=1 \wedge w=b]} \\
&\qquad = \frac{1}{1+\Phi(b)},
\end{align*}
where
\begin{align*}
\Phi(b)
	&\triangleq \frac{\Pr[u=1 \wedge w=b]}{\Pr[u=0 \wedge w=b]} \\
	& = \frac{\Pr[w=b \,|\, u=1]\, \Pr[u=1] }
		     {\Pr[w=b \,|\, u=0]\, \Pr[u=0] }
\end{align*}
is decreasing in $b$ by~\refeq{eq:zero-is-better-singleton}.
\end{proof}

We will also need a stronger, \emph{conditional}, version of Lemma~\ref{cl:zero-is-better-subset} whose proof is essentially identical (and omitted).

\begin{claim}\label{cl:zero-is-better-cond}
Suppose $S \subset \mathcal{T}_z$ and $u\neq z$ is a tree node such that $\mathcal{T}_u$ is disjoint with $S$. Then
\begin{align}
\Pr[Z_S \,|\, z=0,\, u=1] \geq \Pr[Z_S \,|\, z=1,\, u=1].
\end{align}
\end{claim}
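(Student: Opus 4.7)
The plan is to mirror the inductive structure of Claim~\ref{cl:zero-is-better-subset}, performing induction on the depth of $z$, but carrying the conditioning event $\{u=1\}$ through. The base case where $z$ is a leaf is immediate: either $S = \emptyset$ (both sides equal $1$) or $S = \{z\}$ (left side is $1$, right side is $0$).

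For the inductive step, I would split on the position of $u$. In the easy case where $u \notin \mathcal{T}_z$, the Markov property of the tree Bayesian network makes $\pi(u)$ conditionally independent of everything in $\mathcal{T}_z$ given $\pi(z)$, so $\Pr[Z_S \mid z=b, u=1] = \Pr[Z_S \mid z=b]$ and the inequality reduces to Claim~\ref{cl:zero-is-better-subset}. A degenerate subcase is $u$ an ancestor of $z$, which forces $S \subset \mathcal{T}_z \subset \mathcal{T}_u$ and hence $S = \emptyset$, trivializing the claim.

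The interesting case is $u$ a proper descendant of $z$. Let $u_{j_0}$ be the child of $z$ on the $z \to u$ path. I decompose over children of $z$ with non-empty $S_i := S \cap \mathcal{T}_{u_i}$, writing $\Pr[Z_S \mid z=b, u=1] = \prod_i \Pr[Z_{S_i} \mid z=b, u=1]$. For $i \neq j_0$, the factor depends only on randomness inside $\mathcal{T}_{u_i}$, which is conditionally independent of $\mathcal{T}_{u_{j_0}}$ (hence of $\{u=1\}$) given $\pi(z)$, so the factor collapses to $\Pr[Z_{S_i} \mid z=b]$ and is handled exactly as in the original proof. If $u = u_{j_0}$, the $j_0$-th factor equals $\Pr[Z_{S_{j_0}} \mid u_{j_0}=1]$ and is independent of $b$. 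Otherwise, expand the $j_0$-th factor over $a = \pi(u_{j_0})$: the inductive hypothesis applied at $u_{j_0}$ (with the same $u$ and $S_{j_0}$, whose depth is strictly smaller) yields $\Pr[Z_{S_{j_0}} \mid u_{j_0}=0, u=1] \geq \Pr[Z_{S_{j_0}} \mid u_{j_0}=1, u=1]$, and the same two-term rearrangement as in the original proof will close the case, provided that the posterior weights $\tilde\nu(a \mid b) := \Pr[u_{j_0} = a \mid z=b, u=1]$ also satisfy $\tilde\nu(0 \mid 0) \geq \tilde\nu(0 \mid 1)$.

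The main obstacle, and the only genuinely new step, is this posterior monotonicity. By Bayes' rule combined with the Markov property, $\tilde\nu(a \mid b) \propto \nu_{j_0}(a \mid b)\, p_a$ where $p_a := \Pr[u=1 \mid u_{j_0}=a]$ does not depend on $b$. The prior monotonicity $\nu_{j_0}(0 \mid 0) \geq \nu_{j_0}(0 \mid 1)$ is exactly the $q_0, q_1 \leq \tfrac12$ fact used in the original proof. A short calculation then shows that reweighting by the common likelihood factor preserves the ordering --- equivalently, the map $\alpha \mapsto \alpha p_0 / (\alpha p_0 + (1-\alpha)\, p_1)$ is non-decreasing in $\alpha$ --- which transfers monotonicity from the prior to the posterior and completes the plan.
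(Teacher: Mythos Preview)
Your proposal is correct and matches the paper's intended approach: the paper explicitly states that the proof is ``essentially identical'' to that of Claim~\ref{cl:zero-is-better-subset} and omits it, and your induction on the height of $z$ carrying the extra conditioning $\{u=1\}$ is precisely that. You have correctly isolated the one genuinely new wrinkle --- the posterior monotonicity $\tilde\nu(0\mid 0)\geq\tilde\nu(0\mid 1)$ --- and your Bayes-rule argument (reweighting by the $b$-independent likelihood $p_a=\Pr[u=1\mid u_{j_0}=a]$, with $\alpha\mapsto \alpha p_0/(\alpha p_0+(1-\alpha)p_1)$ nondecreasing) handles it cleanly; one minor remark is that the case $u=u_{j_0}$ actually forces $S_{j_0}=\emptyset$ (since $\mathcal{T}_u$ is disjoint from $S$), so that index simply drops out of the product rather than contributing a $b$-independent factor.
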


We will use Corollary~\ref{cor:zero-is-better-singleton} and Lemma~\ref{cl:zero-is-better-cond} to prove~\refeq{eq:zero-is-better} for the case $S\cap T_z = \emptyset$.

\begin{claim}\label{cl:zero-is-better-disj}
If $S$ is disjoint with $\mathcal{T}_z$ then~\refeq{eq:zero-is-better} holds.
\end{claim}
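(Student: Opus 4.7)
My plan is to reduce Claim~\ref{cl:zero-is-better-disj} to Claim~\ref{cl:zero-is-better-subset} via the lowest common ancestor of $\{z\} \cup S$. Let $r$ denote this LCA. Since $z$ is a leaf not lying in $S$, and $S \cap \mathcal{T}_z = \emptyset$, the node $r$ must be a strict ancestor of $z$; moreover, letting $v$ be the unique child of $r$ on the $r \to z$ path (possibly $v = z$), the minimality of $r$ as an LCA forces $S \cap \mathcal{T}_v = \emptyset$, so $S$ is contained in the union of subtrees rooted at children of $r$ distinct from $v$. In particular $S \subset \mathcal{T}_r$.

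The tree Markov property of the Bayesian network then implies that conditioning on $\pi(r)$ makes $\pi(z)$ and the restriction of $\pi$ to $S$ independent, since they are determined by mutations occurring in disjoint subtrees hanging off $r$. Consequently $\Pr[Z_S \,|\, z=b,\, r=c] = \Pr[Z_S \,|\, r=c]$, and marginalizing over $\pi(r)$ yields
\begin{align*}
\Pr[Z_S \,|\, z=b] = \sum_{c \in \{0,1\}} \Pr[r=c \,|\, z=b]\; \Pr[Z_S \,|\, r=c].
\end{align*}

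It remains to verify that both factors in this sum are monotone in the right direction. Applying Claim~\ref{cl:zero-is-better-subset} with $r$ in place of $z$ gives $\Pr[Z_S \,|\, r=0] \geq \Pr[Z_S \,|\, r=1]$, since $S \subset \mathcal{T}_r$. For positive correlation of $r$ and $z$, apply Claim~\ref{cl:zero-is-better-subset} to the singleton $S' = \{z\} \subset \mathcal{T}_r$ (again with $r$ in place of $z$) to obtain $\Pr[\pi(z)=0 \,|\, r=0] \geq \Pr[\pi(z)=0 \,|\, r=1]$; the symmetric form of FKG-type positive correlation between two binary events then yields $\Pr[r=0 \,|\, z=0] \geq \Pr[r=0 \,|\, z=1]$. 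Setting $\alpha = \Pr[r=0 \,|\, z=0] - \Pr[r=0 \,|\, z=1] \geq 0$ and $\Delta = \Pr[Z_S \,|\, r=0] - \Pr[Z_S \,|\, r=1] \geq 0$, an elementary calculation gives
\begin{align*}
\Pr[Z_S \,|\, z=0] - \Pr[Z_S \,|\, z=1] \;=\; \alpha\,\Delta \;\geq\; 0,
\end{align*}
which is~\refeq{eq:zero-is-better}. The edge cases are painless: if $S = \emptyset$ both sides equal $1$, and if $z$ is the root then $S\cap\mathcal{T}_z=\emptyset$ forces $S=\emptyset$.

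The main subtlety I expect to nail down carefully is the tree-Markov step — specifically, justifying cleanly that the LCA $r$ ``separates'' $z$ from $S$ once $\pi(r)$ is fixed, so that the conditional-independence identity used above genuinely holds in the Bayesian network defined by Algorithm~\ref{alg:users}. After that, everything reduces to already-established material (Claim~\ref{cl:zero-is-better-subset} applied twice, plus the symmetric Bayes equivalence for FKG-type correlation on two binary random variables).
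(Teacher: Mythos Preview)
Your reduction has a real gap at the step ``minimality of $r$ as an LCA forces $S \cap \mathcal{T}_v = \emptyset$.'' Minimality of $r = \mathrm{LCA}(\{z\}\cup S)$ only tells you that $\{z\}\cup S \not\subset \mathcal{T}_v$, i.e.\ $S \not\subset \mathcal{T}_v$; it does \emph{not} give $S\cap\mathcal{T}_v=\emptyset$. Concretely: take a root $r$ with children $a,b$; let $a$ have children $z$ and $s_1$, and let $b$ have child $s_2$; set $S=\{s_1,s_2\}$. Then $S\cap\mathcal{T}_z=\emptyset$, the LCA of $\{z\}\cup S$ is $r$, the child on the $r\to z$ path is $v=a$, yet $s_1\in \mathcal{T}_a$. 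In this example $\pi(z)$ and $\pi|_S$ are \emph{not} conditionally independent given $\pi(r)$, because $\pi(z)$ and $\pi(s_1)$ are both downstream of $\pi(a)$. So the identity $\Pr[Z_S\,|\,z=b,\,r=c]=\Pr[Z_S\,|\,r=c]$ fails, and the whole decomposition collapses.

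This is exactly the obstruction the paper's proof is engineered around. Rather than seeking a single separating node, the paper conditions on the root and proves the \emph{conditional} inequality $\Pr[Z_S\,|\,r=c,\,z=0]\geq\Pr[Z_S\,|\,r=c,\,z=1]$ by induction down the tree. The inductive step factorises $Z_S$ over the children of $r$; the only nontrivial factor is the one for the child $u_1$ whose subtree contains $z$ (and possibly part of $S$). Handling that factor requires two auxiliary facts beyond Claim~\ref{cl:zero-is-better-subset}: a \emph{conditional} version of it (Claim~\ref{cl:zero-is-better-cond}, which allows conditioning on $\{u=1\}$ for a node $u$ whose subtree avoids $S$) and a posterior monotonicity statement (Corollary~\ref{cor:zero-is-better-singleton}) for $\Pr[u_1=0\,|\,z=b,\,r=c]$. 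Your FKG-style step is morally the latter, but you still need the former, and you need the inductive scaffolding because no single conditioning node separates $z$ from $S$ in general.
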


\begin{proof}
Suppose $S$ is disjoint with $\mathcal{T}_z$, and let $r$ be the root of the tree. We will use induction on the tree to prove the following: for each $c\in \{0,1\}$,
\begin{align}\label{cl:zero-is-better-disj-hyp}
\Pr[Z_S \,|\, r = c,\, z=0] \geq \Pr[Z_S \,|\, r = c,\, z=1]
\end{align}
For the induction base, consider a tree of depth 2, consisting of the root $r$ and the leaves. Then $z\not\in S$ is a leaf, so $Z_S$ is independent of $\pi(z)$ given $\pi(r)$, so~\refeq{cl:zero-is-better-disj-hyp} holds with equality.

For the induction step, fix $c\in \{0,1\}$. Let us set up the notation similarly to the proof of Claim~\ref{cl:zero-is-better-subset}. Consider children $u_i$ of $r$ such that the intersection $S_i \triangleq S\cap \mathcal{T}_{u_i}$ is non-empty. Let $u_1 \LDOTS u_k$ be all such children. Assume $z\in \mathcal{T}_{u_i}$ for some $i$ (else, $Z_S$ is independent from $\pi(z)$ given $\pi(r)$, so~\refeq{cl:zero-is-better-disj-hyp} holds with equality); without loss of generality, assume this happens for $i=1$. For brevity, for $a,b \in\{0,1\}$ denote
\begin{align*}
f_i(a,b) 	&\triangleq \Pr[Z_{S_i} \;\;\;\;\;|\, u_i = a,\, z=b] \\
\nu_i(a|b)	&\triangleq \Pr[u_i = a \,|\, r=c,\,\;\; z=b].
\end{align*}
Note that $f_i(a,b)$ and $\nu_i(a|b)$ do not depend on $b$ for $i>1$.

Then for each $b\in \{0,1\}$
\begin{align*}
&\Pr[Z_S \,|\, r = c,\, z=b] \\
&\qquad = \sum_{a_i\in \{0,1\},\; i\geq 1} \;\;
		\prod_{i\geq 1}\; f_i(a_i,b)\; \nu_i(a_i|b)\\
&\qquad =  \Phi \times \textstyle{\sum_{a\in \{0,1\}}}\;
		f_1(a,b)\; \nu_1(a|b),
\end{align*}
where
\begin{align*}
\Phi \triangleq
	\sum_{a_i\in \{0,1\},\; i\geq 2} \;\;
		\prod_{i\geq 2}\; f_i(a_i,b)\, \nu_i(a_i|b)
\end{align*}
does not depend on of $b$. Therefore:
\begin{align}
&\Pr[Z_S \,|\, r = c,\, z=1] - \Pr[Z_S \,|\, r = c,\, z=1]
		\nonumber \\
& \quad = \Phi \times \textstyle{\sum_{a\in \{0,1\}}} \nonumber\\
	& \qquad\qquad \quad
		\left[\; f_1(a,0)\, \nu_1(a|0) - f_1(a,1)\, \nu_1(a|1) \;\right]
		\label{eq:zero-is-better-disj-eq0}\\
& \quad \geq \Phi \times \textstyle{\sum_{a\in \{0,1\}}}\;
		f_1(a,1) \,\left[\; \nu_1(a|0) - \nu_1(a|1) \;\right]
		\label{eq:zero-is-better-disj-eq1}\\ 	
& \quad \geq \Phi \times  f_1(1,1) \;
		\textstyle{\sum_{a\in \{0,1\}}}\;
			\left[\; \nu_1(a|0) - \nu_1(a|1) \;\right]
		\label{eq:zero-is-better-disj-eq2}\\		
& \quad =0.
		\label{eq:zero-is-better-disj-eq3}
\end{align}
The above transitions hold for the following reasons:
\begin{description}
\item[($\ref{eq:zero-is-better-disj-eq0}\!\to\!\ref{eq:zero-is-better-disj-eq1}$)]\hspace{-2.5mm}
By Induction Hypothesis,
	$f_1(a,0) \geq f_1(a,1)$

\item[($\ref{eq:zero-is-better-disj-eq1}\!\to\!\ref{eq:zero-is-better-disj-eq2}$)]\hspace{-2.5mm}
By Lemma~\ref{cl:zero-is-better-cond}
	$f_1(0,1) \geq f_1(1,1)$,
and moreover we have $\nu_1(0|0) \geq \nu_1(0|1)$ by Corollary~\ref{cor:zero-is-better-singleton}.

\item[($\ref{eq:zero-is-better-disj-eq2}\!\to\!\ref{eq:zero-is-better-disj-eq3}$)]\hspace{-2mm}
	Since $\nu_i(0|0) + \nu_i(1|0) = \nu_i(0|1) + \nu_i(1|1) = 1 $
\end{description}
This completes the proof of the inductive step.
\end{proof}

\OMIT{ 
Here the transition
$(\ref{eq:zero-is-better-disj-eq0}\to\ref{eq:zero-is-better-disj-eq1})$ holds because by Induction Hypothesis,
	$f_1(a,0) \geq f_1(a,1)$.
The transition
$(\ref{eq:zero-is-better-disj-eq1x}\to\ref{eq:zero-is-better-disj-eq2})$ holds because by Lemma~\ref{cl:zero-is-better-cond}
	$f_1(0,1) \geq f_1(1,1)$,
and $\nu_1(0|0) \geq \nu_1(0,1)$.
Finally, the transition
$(\ref{eq:zero-is-better-disj-eq2}\to\ref{eq:zero-is-better-disj-eq3})$ holds because
	$\nu_i(0|0) + \nu_i(1|0) = \nu_i(0|1) + \nu_i(1|1) = 1 $.
} 

\OMIT{ 
Our goal is to define a distribution $\P$ over functions $\pi:X\to \{0,1\}$. If $x\in X$ is a proper leaf, just set $\pi(x) = \pi_T(x)$. Else, $x$ is an infinite path $x = (x_0,x_1,x_2, \ldots)$ away from the root $x_0$. Consider the random events
	$\mathcal{E}(x_i)$, $i\in \N$.
By construction,
\begin{align}\label{eq:thm-existence-Ey}
\textstyle{\sum_{i\in\N} \Pr[\mathcal{E}(x_i)]}
	&\leq \textstyle{\sum_{i\in\N} \D(x_i, x_{i-1})}/ \alpha
	= \D(x_0,x)/\alpha < \infty.
\end{align}
Thus by Borel-Cantelli Lemma, with probability $1$ there exists an $i_0$ such that for any $i\geq i_0$ none of the events $\mathcal{E}(x_i)$ hold. It follows that for each $i\geq i_0$ we have
	$\pi(x_i) = \pi(x_{i_0})$.
Define
	$\pi(x) = \pi(x_{i_0})$.
This completes our construction of $\pi()$.

Note that~\refeq{eq:thm-existence-Ey} is the only place in the proof where we need $\alpha>0$. In particular, this condition is not used if the tree is finite (and hence all leaves are proper).
} 

\OMIT{ 
It follows from~\refeq{eq:construction-consistence} that the pointwise mean of $\P$ is indeed $\mu$: for each node $y\in V$ with parent $x$,
\begin{align*}
\E[\pi(y)]
	&= \E[\zeta(y, \pi(x))]
	= \E[ (1-\pi(x))\, q_0(y) + \pi(x)(1-q_1(y))] \\
	& = (1-\mu(x))\, q_0+ \mu(x)(1-q_1) = \mu(y).
\end{align*}
}

\OMIT{ 
For a non-proper leaf $x = (x_0,x_1,x_2, \ldots)$, recall that
	$\Pr[\pi(x) = \lim_{i\to \infty} \pi(x_i)]=1$. Therefore
\begin{align*}
\E[\pi(x)]
	= \lim_{i\to\infty} \E[\pi(x_i)]
	= \lim_{i\to\infty} \mu(x_i)
	= \mu(x).
\end{align*}
} 

\OMIT{ 
{\bf From Section 4.2.} In Section 4.2, we note that in the {\tt RankedBandits} algorithm, the expected payoff of the slot $i$ algorithm, conditional on this algorithm being invoked, is $\mu(x|S)$, where $x$ is the document chosen, and $S$ is the set of documents in the upper slots. Thus, we treat $S$ as a ``context" (``hint") to $\A_i$. We generalize from a given hint $S\subset X$ to the ``similar" ones, using the following notion of distance on subsets of $X$:
\begin{align}\label{eq:context-distance}
\DC(S,S')
	:= \inf \textstyle{\sum_{i=1}^n} \D(x_i,x'_i),
\end{align}
where the infimum is taken over all $n\in\N$ and over all $n$-element sequences $\{x_i\}$ and $\{x'_i\}$ that enumerate, possibly with repetitions, all values in $S$ and $S'$, respectively. We justify using $\DC$ via the following lemma:
} 

\section{Further directions}
\label{sec:conclusions}

This paper initiates the study of bandit learning-to-rank with side information on similarity between documents, focusing on an idealized model of document similarity based on the new notion of ``conditional Lipschitz-continuity''.
As discussed in Section~\ref{sec:provable}, we conjecture that provable performance guarantees can be improved significantly.
On the experimental side, future work will include evaluating the model on web search data, and designing sufficiently memory- and time-efficient implementations to allow experiments on real users. An interesting challenge in such an endeavor would be to come up with effective similarity measures.  A natural next step would be to also exploit the similarity between search queries.


\bibliography{bib-abbrv,bib-icml10,bib-bandits,bib-slivkins}

\appendix
\section{Instance-dependent regret bounds from prior work}
\label{app:instance-dependent}

In this section we discuss instance-dependent regret bounds from prior work on \UCB-style algorithms for the single-slot setting. The purpose is to put forward a concrete mathematical evidence which suggests that $\rankedGridUCB$, $\zooming$ and $\MCzooming$ are likely to satisfy strong upper bounds on regret in the $k$-slot setting (perhaps under some additional assumptions), even if such bounds are beyond the reach of our current techniques. Similarly, we believe that the regret bound for $\rankedContextualZooming$ that we have been able to prove (Theorem~\ref{thm:regret-RankContZoom}) is overly pessimistic. A secondary purpose is to provide more intuition for when these algorithms are likely to excel.

Our story begins with the comparison between the guarantees for $\EXP$ and $\UCB$ in the standard (single-slot, metric-free) bandit setting, and then progresses to Lipschtz MAB and contextual Lispchitz MAB.

In what follows, we let $\mu$ denote the vector of expected rewards in the stochastic reward setting, so that $\mu(x)$ is the expected reward of arm $x$. Let 	
    $\Delta(x) \triangleq \max \mu(\cdot) - \mu(x)$
denote the ``badness'' of arm $x$ compared to the optimum.

\subsection{Standard bandits: $\UCB$ vs. $\EXP$}

Algorithm \EXP~\citep{bandits-exp3} achieves regret
	$R(T) = \tilde{O}(\sqrt{nT})$
against an oblivious adversary. In the stochastic setting, \UCB~\citep{bandits-ucb1} performs much better, with \emph{logarithmic} regret for every fixed $\mu$. More specifically, each arm $x\in X$ contributes only
	$O(\log T)/\Delta(x)$
to regret.
Noting that the total regret from playing arms with $\Delta(\cdot)\leq \delta$ can be a priori upper-bounded by $\delta T$, we bound regret of \UCB{} as:
\begin{align}\label{eq:regret-UCB1}
R(T) = \min_{\delta>0}\left(\delta T+
		\textstyle{\sum_{x\in X:\, \Delta(x)>\delta}}\, \tfrac{O(\log T)}{\Delta(x)}
	\right).
\end{align}
Note that~\eqref{eq:regret-UCB1} depends on $\mu$. In particular, if $\Delta(\cdot) \geq \delta$ then $R(T) = O(\tfrac{n}{\delta}\, \log T)$.

However, for any given $T$ there exists a ``worst-case" pointwise mean $\mu_T$ such that
	$R(T) = \tilde{\Theta}(\sqrt{nT})$
in~\eqref{eq:regret-UCB1}, matching \EXP. The above regret guarantees for \EXP{} and \UCB{} are optimal up to constant factors~\citep{bandits-exp3,sleeping-colt08}.


\subsection{Bandits in metric spaces}

Let $(X,\D)$ denote the metric space. Recall that the \emph{covering number} $N_r(X)$ is the minimal number of balls of radius $r$ sufficient to cover $X$, and the \emph{covering dimension} is defined as
\begin{align}\label{eq:dim-defn-app}
\CovDim(X,\D) \triangleq \inf\{d\geq 0: N_r(X) \leq \alpha\, r^{-d} \quad\forall r>0 \}.
\end{align}
(Here $\alpha>0$ is a constant which we will keep implicit in the notation.)

Against an oblivious adversary, \gridEXP{} has regret
\begin{align}\label{eq:regret-dim-app}
R(T) = \tilde{O}(\alpha\,T^{(d+1)/(d+2)}),
\end{align}
where $d$ is the covering dimension of $(X,\D)$.

For the stochastic setting, \gridUCB{} and the zooming algorithm have better $\mu$-specific regret guarantees in terms of the covering numbers. These guarantees are similar to~\eqref{eq:regret-UCB1} for \UCB. In fact, it is possible, and instructive, to state the guarantees for all three algorithms in a common form.

Consider reward scales
	$\mathcal{S} = \{2^i:\, i\in \N\}$,
and for each scale $r\in \mathcal{S}$ define
	$$X_r = \{ x\in X:\, r<\Delta(x) \leq 2r \}.$$
Then regret~\refeq{eq:regret-UCB1} of \UCB{} can be restated as
\begin{align}\label{eq:regret-Nr}
R(T) = \min_{\delta>0}\left(\delta T+
		\textstyle{\sum_{r\in \mathcal{S}:\, r\geq \delta}}\,
			N_{(\delta,r)}\, \tfrac{O(\log T)}{r}
	\right),
\end{align}
where
    $N_{(\delta,r)} = |X_r|$.
Further, it follows from the analysis in~\citep{Bobby-nips04,LipschitzMAB-stoc08} that regret of \gridUCB{} is \eqref{eq:regret-Nr} with
	$N_{(\delta,r)} = N_\delta(X_r)$.
For the zooming algorithm, the $\mu$-specific bound can be improved to~\eqref{eq:regret-Nr} with
	$N_{(\delta,r)} = N_r(X_r)$.
These results are summarized in Table~\ref{tab:mu-specific}.

\begin{table}[t]
\caption{Regret bounds in terms of covering numbers}
\label{tab:mu-specific}
\vspace{3mm}

\begin{center}
\begin{tabular}{c|l}
algorithm               & regret is \refeq{eq:regret-Nr} with ... \\ \hline
$\UCB$                  & $N_{(\delta,r)} = |X_r|$          \\
$\gridUCB$              & $N_{(\delta,r)} = N_\delta(X_r)$  \\
zooming algorithm       &$N_{(\delta,r)} = N_r(X_r)$          \\
$\contextualZooming$   & $N_{(\delta,r)} = N_r(\XpairsR)$.
\end{tabular}
\end{center}

\end{table}

For the worst-case $\mu$ one could have
	$N_\delta(X_r) = N_\delta(X)$,
in which case the $\mu$-specific bound for $\gridUCB$ essentially reduces to~\eqref{eq:regret-dim-app}.

For the zooming algorithm, the $\mu$-specific bound above implies an improved version of~\eqref{eq:regret-dim-app} with a different, smaller $d$ called the \emph{zooming dimension}:
    $$\ZoomDim(X,\D,\mu) \triangleq \inf\{d\geq 0: N_r(X_r) \leq c\, r^{-d} \quad\forall r>0 \}. $$
Note that the zooming dimension depends on the triple $(X,\D,\mu)$ rather than on the metric space alone.
It can be as high as the covering dimension for the worst-case $\mu$, but can be much smaller (e.g., $d=0$) for ``nice'' problem instances,  see~\citep{LipschitzMAB-stoc08} for further discussion. For a simple example, suppose an \eps-exponential tree metric has a ``high-reward" branch and a ``low-reward" branch with respective branching factors $b\ll b'$. Then the zooming dimension is $\log_{1/\eps}(b)$, whereas the covering dimension is $\log_{1/\eps}(b')$.

\subsection{Contextual bandits in metric spaces}

Let $\mu(x|h)$ denote the expected reward from arm $x$ given context $h$. Recall that the algorithm is given metrics $\D$ and $\DC$ on documents and contexts, respectively, such that for any two documents $x,x'$ and any two contexts $h,h'$ we have
\begin{align*}
	|\mu(x|h) - \mu(x'|h')| \leq \D(x,x') +\DC(h,h').
\end{align*}

\noindent Let $\XC$ be the set of contexts, and $\Xpairs = X\times \XC$ be the set of all (document, context) pairs. More abstractly, one considers the metric space $(\Xpairs,\Dpairs)$, henceforth the \emph{DC-space}, where the metric is
\begin{align*}
	\Dpairs((x,h),\, (x',h')) = \D(x,x')+\DC(h,h').
\end{align*}

We partition $\Xpairs$ according to reward scales
	$r\in \mathcal{S}$:
\begin{align*}
	\Delta(x|h) &\triangleq \max \mu(\cdot|h) - \mu(x|h), \quad x\in X, h\in \XC. \\
	\XpairsR &\triangleq \{ (x,h)\in \Xpairs:\, r<\Delta(x|h) \leq 2r \}.
\end{align*}
Then contextual regret of~\contextualZooming{} can be bounded by~\eqref{eq:regret-Nr} with
	$N_{(\delta,r)} = N_r(\XpairsR)$,
where $N_r(\cdot)$ now refers to the covering numbers in the DC-space (see Table~\ref{tab:mu-specific}).

Further, one can define the \emph{contextual} zooming dimension as
    $$\DIMpairs(X,\D,\mu) \triangleq \inf\{d\geq 0: N_r(X_r) \leq c\, r^{-d} \quad\forall r>0 \}. $$
Then one obtains~\eqref{eq:regret-dim-app} with  $d = \DIMpairs$. In the worst case, we could have $\mu$ such that
	$N_r(\XpairsR) = N_r(\Xpairs)$,
in which case
    $\DIMpairs \leq \CovDim(\Xpairs,\Dpairs)$.

The regret bounds for~\contextualZooming{} can be improved by taking into account ``benign" context arrivals: effectively, one can prune the regions of $\XC$ that correspond to infrequent context arrivals, see~\citep{contextualMAB-slivkins09} for details. This improvement can be especially significant if
	$\CovDim(\XC,\DC) > \CovDim(X,\D)$.

\end{document}